\title{\bfseries\papertitle}
\author{
Aadirupa Saha%
\thanks{Indian Institute of Science, Bangalore, India; {\tt aadirupa@iisc.ac.in}.}
\and 
Tomer Koren%
\thanks{Tel Aviv University and Google Tel Aviv; {\tt tkoren@tauex.tau.ac.il}.} 
\and Yishay Mansour%
\thanks{Tel Aviv University and Google Tel Aviv; {\tt mansour.yishay@gmail.com}.}
}
\theoremstyle{plain}
\newtheorem{thm}{Theorem}
\theoremstyle{definition}
\newtheorem{defn}[thm]{Definition}
\theoremstyle{remark}
\newcommand{\R}{{\mathbb R}}
\renewcommand{\P}{{P}}
\newcommand{\E}{{\mathbf E}}
\newcommand{\1}{{\mathbf 1}}
\newcommand{\cA}{{\mathcal A}}
\newcommand{\cH}{{\mathcal H}}
\newcommand{\cI}{{\mathcal I}}
\newcommand{\cE}{{\mathcal E}}
\newcommand{\ts}{{\tilde s}}
\newcommand{\tK}{{\tilde K}}
\newcommand{\tO}{{\tilde O}}
\renewcommand{\b}{{\mathbf b}}
\newcommand{\bb}{{\bar b}}
\newcommand{\tb}{{\tilde b}}
\newcommand{\hb}{{\hat b}}
\newcommand{\q}{{q}}
\newcommand{\tq}{\tilde \q}
\newcommand{\sm}{\setminus}
\DeclareMathOperator*{\argmax}{arg\,max}
\def \papertitle {{Adversarial Dueling Bandits}}
\def \prob {{Adversarial Dueling Bandits}}
\def \obj {{Borda Score}}
\def \gap {{Fixed-Gap}}
\def \acr {{Adv-Borda}}
\newcommand{\tk}[1]{\textcolor{magenta}{\bfseries \{TK:~#1\}}}
\newcommand{\algdexp}{Dueling-EXP3}
\newcommand{\alggap}{Borda-Confidence-Bound}
\begin{document}

\maketitle

\begin{abstract}
We introduce the problem of regret minimization in Adversarial Dueling Bandits. As in classic Dueling Bandits, the learner has to repeatedly choose a pair of items and observe only a relative binary `win-loss' feedback for this pair, but here this feedback is generated from an arbitrary preference matrix, possibly chosen adversarially. 
Our main result is an algorithm whose $T$-round regret compared to the \emph{Borda-winner} from a set of $K$ items is $\tilde{O}(K^{1/3}T^{2/3})$, as well as a matching $\Omega(K^{1/3}T^{2/3})$ lower bound. We also prove a similar high probability regret bound.
We further consider a simpler \emph{fixed-gap} adversarial setup, which bridges between two extreme preference feedback models for dueling bandits: stationary preferences and an arbitrary sequence of preferences. For the fixed-gap adversarial setup we give an $\smash{ \tilde{O}((K/\Delta^2)\log{T}) }$ regret algorithm, where $\Delta$ is the gap in Borda scores between the best item and all other items, and show a lower bound of $\Omega(K/\Delta^2)$ indicating that our dependence on the main problem parameters $K$ and $\Delta$ is tight (up to logarithmic factors).
\end{abstract}

\section{Introduction}
\label{sec:intro}

\emph{Dueling Bandits} is an online decision making framework similar to the
well known (stochastic) multi-armed bandit (MAB)
problem~\cite{auer02,Slivkins19}, that has gained widespread attention in the
machine learning community over the past
decade~\cite{Yue+12,Zoghi+14RCS,Zoghi+15}.
%
%
In Dueling Bandits, a learner repeatedly selects a pair of items to be
compared to each other in a ``duel,'' and consequently observe a binary stochastic
preference feedback, which can be interpreted as the winning item in this duel.
The goal of the learner is to minimize the regret with respect to the best item
in hindsight, according to a certain score function.

%
%
Numerous real-world applications are naturally modelled as dueling bandit
problems, including movie recommendations,
tournament ranking, search engine optimization, retail management, etc. (see also \cite{Busa14survey,Yue+09}). 
Indeed,
in many of these scenarios, users with whom the algorithm interacts with find it more
natural to provide binary feedback by comparing two alternatives rather than
giving an absolute score for a single alternative.
Over the years, several algorithms have been proposed for addressing dueling
bandit problems~\cite{Ailon+14,Zoghi+14RUCB,Komiyama+15,Zoghi+14RCS} and there
has been some work on extending the pairwise preference to more general
subset-wise preferences~\cite{Sui+17,Brost+16,SG18,SGwin18,Ren+18}.

While almost all of the existing literature on dueling bandits focus on
stochastic \emph{stationary} preferences, in reality preferences might vary
significantly and unpredictably over time.
For example, in movie recommendation systems, user preferences may evolve
according to daily and hourly viewing trends; in web-search optimization,
relevance of various websites may vary rather unpredictably.
In other words, many of the real-world applications of dueling bandits actually
deviate from the stochastic feedback model, and would more faithfully be
modelled in a robust worse-case (adversarial) model that alleviates the strong
stochastic assumption and allows for an arbitrary sequence of preferences over time.
For similar reasons, the MAB problem, and more generally, online learning, are
frequently studied in a non-stochastic adversarial
setup~\cite{CsabaNotes18,BubeckNotes+12,PLG06,seldin+14,seldin+17,neu15,bubest12}.

Surprisingly, however, a non-stochastic version of dueling bandits has not been well studied (with the only exception being \cite{Adv_DB}, discussed below.) The first challenge in eschewing stationarity in dueling bandits lies in the performance benchmark compared to which regret is defined. Indeed, most works on stochastic dueling bandits rely on the existence of a \emph{Condorcet winner}: an item being preferred (and often by a gap) when compared with any other item.
In an adversarial environment, however, assuming a Condorcet winner makes  little sense as it would constrain the adversary to consistently prefer a certain item at all rounds, ultimately defeating the purpose of a non-stationary model in the first place. Another main challenge is the inherent disconnect between the feedback observed by the learner and her payoff at any given round; while this disparity already exists in stochastic models of dueling bandits, in an adversarial setup it becomes more tricky to attribute preferential information to the \emph{instantaneous} quality of items.

\paragraph{Our contributions.}

In this paper, we introduce and study an adversarial version of dueling bandits.
To mitigate the issues associated with Condorcet assumptions, and following recent literature on dueling bandits (e.g., \cite{Jamieson+15,Ramamohan+16,falahatgar_nips}), we focus on the so-called Borda criterion. 
The \emph{Borda score} of an item is the probability that it is preferred over another item chosen uniformly at random.
A \emph{Borda winner} (i.e., an item with the highest Borda score) always exists for any preference matrix, and more generally, this notion naturally extends to any arbitrary sequence of preference matrices.
However, the second challenge from above remains: the Borda score of an item is not directly related in nature to the preferential feedback observed for this item on rounds where it is chosen for a duel.


The main contributions of this paper can be summarized as follows:
\begin{itemize}[leftmargin=!]
\item 
We introduce and formalize an adversarial model for $K$-armed dueling bandits with standard binary ``win-loss'' preferential feedback (and where regret is measured with respect to Borda scores).
To the best of our knowledge, we are first to study such a setup.

\item
In the general adversarial model, where the sequence of preference matrices is allowed to be entirely arbitrary, we present an algorithm with expected regret bounded by
$\smash{ \tO(K^{1/3}T^{2/3}) }$.\footnote{Throughout, the notation $\smash{ \tilde O(\cdot) }$ hides logarithmic factors.} 
We further demonstrate how to modify our algorithm so as to guarantee a similar bound with high probability.
We also give a lower bound of $\smash{ \Omega(K^{1/3}T^{2/3}) }$, showing our algorithm is nearly optimal.

\item
We consider a more specialized fixed-gap adversarial model, that bridges between the two extreme preference feedback models for dueling bandits: the well-studied \emph{stationary} stochastic preferences, and \emph{fully adversarial} preferences.
Here, we assume that there is a fixed item whose average Borda score at any point in time exceeds that of any other item
by at least $\Delta > 0$, where $\Delta$ is a gap parameter unknown to the learner. (Other than constraining this fixed gap, the preference assignment may change adversarially.)
We present an algorithm that achieves regret $\smash{ \tilde{O}(K/\Delta^2) }$, and show that it is near-optimal by proving a regret lower bound of $\Omega(K/\Delta^2)$.

\end{itemize}

Our results thus reveal an inherent gap in the achievable regret between dueling bandits and standard multi-armed bandits:
in the adversarial model, the optimal regret in dueling bandits grows like $\smash{\Theta(T^{2/3})}$ whereas in standard bandits $\smash{ \Theta(\sqrt T) }$-type bounds are possible;
likewise, in the fixed-gap model the optimal regret for dueling bandits is $\smash{ \tilde{\Theta}({K}/{\Delta^2}) }$, versus the well-known $\tilde{\Theta}(K/\Delta)$ regret performance for standard fixed-gap (stochastic) bandits.

The reason for this substantial gap, as we explain in more detail in our discussion of lower bounds, is the following. For gaining information about the identity of the best item in terms of Borda scores, the learner might be forced to choose items the scores of which are already (or even initially) known to be suboptimal, and for which she would unavoidably suffer constant regret. Indeed, the Borda score of an item inherently depends on its relative performance compared to \emph{all other items}, and it may be that the identity of the Borda winner is determined solely by its comparison to poorly-performing items.




\paragraph{Related work.}

Dueling bandits were investigated extensively in the stochastic setting.
The most frequently used performance objective in this literature is the regret compared to the \emph{Condorcet Winner}~\cite{Yue+12,Zoghi+14RUCB,Zoghi+15MRUCB,Komiyama+15,BTM}. However, there are quite a few well-established shortcomings of this objective; most importantly, the Condorcet winner often fails to exist even for a fixed preference matrix. (See \cite{Jamieson+15} for more detailed discussion.)
In absence of Condorcet winners, there are other preference notions studied in the literature, most notably the~\emph{Borda Winner}~\cite{Busa14survey,Jamieson+15,Ramamohan+16,falahatgar_nips}, \emph{Copeland Winner}~\cite{Zoghi+15,komiyama+16,DTS},\footnote{It is worth noting that for the Copeland winner to be at all learnable, a gap assumption is required.} and \emph{Von-Neumann Winner}~\cite{CDB,Balsubramani+16}. In this work, we focus on the Borda Winner, which appears to be the most common alternative.

The only previous treatment of dueling bandits in an adversarial setting appears to be \cite{Adv_DB}, which considers utility-based preferences and thereby imposes a complete ordering of the items in each time step rather than a general preference matrix. Further, their feedback model includes not only the winning item but also a transfer function which is the difference in utilities between the compared items, thus being more similar to standard MAB and largely departs from the original motivation of dueling bandit. 
For the identity transfer function, they show in their adversarial utility-based dueling bandit model a tight regret bound of $\smash{ \tilde{\Theta}(\sqrt{KT}) }$. In contrast, we show for the adversarial dueling bandit model a tight regret bound of $\smash{ \tilde{\Theta}(K^{1/3}T^{2/3}) }$. This shows that when one does not have a direct access to a transfer function and is faced with arbitrary preferences, the regret scales substantially different, i.e., $\smash{ \tilde{\Theta}(T^{2/3}) }$ versus $\smash{ \tilde{\Theta}(T^{1/2}) }$.

The work \cite{Jamieson+15} shows an instance dependent $\tilde\Omega(K/\Delta^2)$ sample complexity lower bound for the Borda-winner identification problem in stochastic dueling bandits. 
In contrast, our lower bound which is similar in magnitude, applies to the regret which is always smaller (and often strictly smaller) than the sample complexity.


\section{Problem Setup} 
\label{sec:prob}

We consider an online decision task over a finite set of items $[K] :=
\{1,2,\ldots, K\}$ which spans over $T$ decision rounds. 
Initially, and obliviously, the environment fixes a sequence of $T$ \emph{preference matrices} $P_1,\ldots,P_T$, where each $\P_t \in [0,1]^{K \times K}$ satisfies $\P_t(i,j) = 1 - \P_t(j,i)$, and $\P_t(i,i) = \tfrac12$ for all $i,j \in [K]$.
The value of $\P_t(i,j)$ is interpreted as the probability that item $i$ wins when matched against item $j$ at time $t$. 
%
Then, at each round~$t$ the learner selects, possibly at random, two items $x_t,y_t\in [K]$
%
and a feedback $o_t \sim \operatorname{Ber}(\P_t(x_t,y_t))$ for the selected pair is revealed, where $\operatorname{Ber}(p)$ denotes a Bernoulli random variable with parameter $p$. Here, feedback of $o_t = 1$ implies that item $x_t$ wins the duel, while $o_t = 0$ corresponds to $y_t$ being the winner. 

The \emph{Borda score} of item $i \in [K]$ with respect to the preference matrix $P_t$ at time $t$ is defined as
\vspace{-5pt}
\begin{align*}
    \forall ~ i \in [K]~: \qquad
    b_t(i) := \frac{1}{K-1}\sum_{j \neq i} \P_t(i,j),  
    \qquad\text{and}\qquad
    i^* := \argmax_{i \in [K]} \sum_{t=1}^T b_t(i).
\end{align*}
i.e., $i^*$ is
the item  with the highest cumulative Borda score at time~$T$.
The learner's $T$-round regret $R_T$ is then defined as follows: 
\begin{align} \label{eq:regret}
    R_T 
    := \sum_{t=1}^T r_t ~,
    \qquad\text{where}\qquad
    r_t := b_t(i^*) - \tfrac12 (b_t(x_t)+b_t(y_t))
    .
\end{align}
We will consider two settings of preference assignments. 
In the \emph{general adversarial setting}, $P_1,\ldots,P_T$ is an arbitrary sequence of preference matrices.
In the \emph{fixed-gap setting}, preferences are set so that there is an item $i^* \in [K]$ for which, at all rounds $t \in [T]$, we have $\bar b_t(i^*) \geq \bar b_t(j) + \Delta$ for any other~$j \neq i^*$, where $\bar b_t(j) := \frac{1}{t}\sum_{\tau = 1}^t b_\tau(j)$ is the average Borda score of item $j \in [K]$ up to time $t$.


\section{{General}~\prob}
\label{sec:alg_borda}

We first consider the general adversarial setup for an arbitrary sequence of preference matrices. We give an algorithm, called Dueling-EXP3 (D-EXP3), which has an expected regret of $\smash{ O((K\log K)^{1/3}T^{2/3}) }$. 
We also show how a simple modification of the D-EXP3 algorithm guarantees regret $\smash{ \tilde O(K^{1/3}T^{2/3}\sqrt{\log (K/\delta)}) }$ with probability at least $1-\delta$.

\subsection{The Dueling-EXP3 Algorithm}

Our algorithm, detailed in \cref{alg:dexp3}, is motivated from the classical EXP3 algorithm for adversarial MAB~\cite{auer02}, and relies on constructing unbiased estimates for scores of individual items at all rounds. 
However, in the dueling setup one has to establish such estimates using only binary preference feedback corresponding to a choice of a pair of items. Technically, the algorithm will estimate a \emph{shifted} version of the Borda score, defined as follows.
%
\begin{defn}
\label{rem:regeqv}
The \emph{shifted Borda score} of item $i \in [K]$ at time $t \in [T]$ is $s_t(i) := \tfrac1K \sum_{j \in [K]} \P_t(i,j)$. 
The \emph{shifted regret} is then defined as $R_T^s := \sum_{t = 1}^T [ s_t(i^*) - \tfrac12 ({s_t(x_t)+s_t(y_t)})]$. 
\end{defn}
Since all scored are ``shifted'' by the same value, this will not have any impact and the differences between Borda scores will be maintained (albeit multiplied by $\tfrac{K}{K-1}$). In particular, the best item is unchanged, i.e., $i^* = \argmax_{i \in [K]}\sum_{t = 1}^{T}b_t(i) = \argmax_{i \in [K]}\sum_{t = 1}^{T}s_t(i)$, and for any $K\ge 2$ and $T > 0$ we have $R_T = \tfrac{K}{K-1} R_T^s$.

At every round $t$, D-EXP3 maintains a weight distribution $\q_t \in \Delta{[K]}$ ($\Delta{[K]}$ is the $K$-simplex), and compute a score estimate $\ts_t(i)$ for each item $i$, being an unbiased estimate of $s_t(i)$ (\cref{lem:est_borda}). Thus, the cumulative estimated score $\sum_{\tau = 1}^t \ts_t(i)$ can be seen as the estimated \emph{cumulative reward} of item $i$ at round $t$, and hence $\q_{t+1}$ is simply updated running an exponential weight update on these estimated cumulative scores along with an $\gamma$-uniform exploration.

\begin{figure}[t]
\begin{center}
\vspace{-5pt}
\begin{algorithm}[H]
   \caption{\textbf{\algdexp \, (D-EXP3)} }
   \label{alg:dexp3}
\begin{algorithmic}[1]
   \STATE {\bfseries Input:} Item set indexed by $[K]$, learning rate $\eta > 0$, parameters $\gamma \in (0,1)$ 
   \STATE {\bfseries Initialize:} Initial probability distribution $\q_1(i) = 1/K, ~\forall i \in [K]$
   \FOR {$t = 1, \ldots, T$}
   \STATE Sample $x_t, y_t \sim \q_t$ i.i.d.~(with replacement) 
   \STATE Receive preference $o_{t}(x_t,y_t) \sim \text{Ber}(\P_t(x_t,y_t))$
   \STATE Estimate scores, for all $i \in [K]$: 
   \begin{align*}
       \ts_t(i) = 
       \frac{\1(x_t = i)}{K \q_t(i)}\sum_{j \in [K]}\frac{\1(y_t = j)o_t(x_t,y_t)}{\q_t(j)}
   \end{align*}
   \STATE Update, for all $i \in [K]$: 
   \begin{align*}
       \tq_{t+1}(i) 
       = 
       \dfrac{\exp(\eta \sum_{\tau = 1}^t \ts_\tau(i))}{\sum_{j = 1}^{K} \exp(\eta \sum_{\tau = 1}^t \ts_\tau(j))}
        \quad;\quad
       q_{t+1}(i) 
       = 
       (1-\gamma)\tq_{t+1}(i) + \frac{\gamma}{K}
   \end{align*}
   \ENDFOR
\end{algorithmic}
\end{algorithm}
\vspace{-12pt}
\end{center}
\end{figure}

We now state the expected regret guarantee we establish for \cref{alg:dexp3}.

\begin{restatable}{thm}{ubdexp}
\label{thm:ub_dexp}
Let $\eta = ((\log K)/(T\sqrt{K}))^{2/3}$ and $\gamma = \sqrt{\eta K}$. 
For any $T$, the expected regret of \cref{alg:dexp3} satisfies
$ 
 \E[R_T] \le 6(K\log K)^{1/3}T^{2/3}.
$ 
\end{restatable}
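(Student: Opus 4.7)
The plan is to run the standard \textsc{EXP3} potential-function argument on the unbiased estimators $\ts_t$ in order to bound the regret of the exponential weights distribution $\tq_t$, then translate the bound to the actual sampling distribution $\q_t$ via the $\gamma$-uniform mixture, and finally pass from the shifted regret $R_T^s$ to $R_T$ using the identity $R_T = \tfrac{K}{K-1} R_T^s$ from \cref{rem:regeqv}. Since $x_t$ and $y_t$ are drawn i.i.d.\ from $\q_t$, we have $\E[R_T^s] = \sum_t [s_t(i^*) - \E\langle \q_t, s_t\rangle]$, so it suffices to control this last quantity.

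Three elementary properties of $\ts_t$ drive the analysis. (i) \emph{Unbiasedness}: $\E[\ts_t(i)\mid \cF_{t-1}] = s_t(i)$ for every $i$, which is \cref{lem:est_borda}. (ii) A \emph{second-moment bound}: a direct computation using $o_t^2 = o_t$ gives $\E[\ts_t(i)^2\mid \cF_{t-1}] = \frac{1}{K^2 \q_t(i)}\sum_y \P_t(i,y)/\q_t(y)$, and the uniform-exploration floor $\q_t(y)\ge \gamma/K$ together with $\P_t(i,y)\le 1$ yields $\E[\ts_t(i)^2\mid \cF_{t-1}]\le 1/(\gamma \q_t(i))$. (iii) A \emph{worst-case range bound}: $\ts_t(i) \le 1/(K \q_t(i) \q_t(y_t)) \le K/\gamma^2$, so that the choice $\gamma = \sqrt{\eta K}$ enforces $\eta\,\ts_t(i) \le 1$ pointwise, which is precisely the regime in which the standard EXP3 potential inequality ($e^x \le 1+x+x^2$ for $x \in [0,1]$) applies.

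Given these, the usual telescoping of the log-potential $\log(W_{t+1}/W_t)$ yields, pathwise,
\begin{equation*}
\sum_{t=1}^T \ts_t(i^*) - \sum_{t=1}^T \langle \tq_t, \ts_t\rangle \;\le\; \frac{\log K}{\eta} + \eta \sum_{t=1}^T \sum_{i}\tq_t(i)\,\ts_t(i)^2.
\end{equation*}
Taking expectation, applying (i)-(ii), and using $\tq_t(i)/\q_t(i) \le 1/(1-\gamma)$ (which follows from $\q_t \ge (1-\gamma)\tq_t$ pointwise) bounds the right-hand side by $\log K/\eta + \eta T K/(\gamma(1-\gamma))$. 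To replace $\tq_t$ by $\q_t$ on the left, I would use $\q_t - \tq_t = \gamma(\mathbf{1}/K - \tq_t)$, which gives $\|\q_t - \tq_t\|_1 \le 2\gamma$ and hence $|\langle \tq_t - \q_t, s_t\rangle| \le 2\gamma$, costing an additional $2\gamma T$. Combining, $\E[R_T^s] \le \log K/\eta + \eta T K/(\gamma(1-\gamma)) + 2\gamma T$. Plugging $\gamma = \sqrt{\eta K}$ collapses the variance term to $O(T\sqrt{\eta K})$, and then $\eta = ((\log K)/(T\sqrt K))^{2/3}$ balances $\log K/\eta$ against $T\sqrt{\eta K}$, each being of order $K^{1/3} T^{2/3}(\log K)^{1/3}$; the factor $K/(K-1)\le 2$ when passing to $R_T$ only adjusts the constant.

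The main technical obstacle is the second-moment bound in (ii): because $\ts_t$ carries two independent inverse-probability factors $\q_t(x_t)^{-1}\q_t(y_t)^{-1}$ instead of the single factor in standard bandit EXP3, the quantity $\sum_i \tq_t(i)\E[\ts_t(i)^2\mid\cF_{t-1}]$ scales as $K/\gamma$ rather than the $K$ one obtains for EXP3. This is exactly what forces $\gamma$ up to $\sqrt{\eta K}$—the threshold at which the range bound (iii) is just barely compatible with $\eta \ts_t(i) \le 1$—and in turn pins the final rate at $T^{2/3}$ rather than $\sqrt T$.
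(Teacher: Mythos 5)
Your proposal follows essentially the same route as the paper: bound the shifted regret $R_T^s$ via the exponential-weights potential applied to the unbiased estimates $\ts_t$, using the same three ingredients (unbiasedness, the second-moment bound $\sum_i \q_t(i)\E[\ts_t(i)^2]\le K/\gamma$, and the range bound $\ts_t(i)\le K/\gamma^2$ ensuring $\eta\ts_t(i)\le 1$ when $\gamma=\sqrt{\eta K}$), and then convert back with $R_T=\tfrac{K}{K-1}R_T^s\le 2R_T^s$. The only difference is bookkeeping for the $\gamma$-mixture: you pay $2\gamma T$ via $\|\q_t-\tq_t\|_1\le 2\gamma$ plus a $1/(1-\gamma)$ inflation of the variance term, whereas the paper multiplies the potential inequality through by $(1-\gamma)$ and pays only $\gamma\sum_t s_t(i^*)\le\gamma T$; this makes your final constant slightly larger than the stated $6$, but the rate and all essential steps are identical.
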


The proof of the expected regret bound crucially relies on the the following key lemmas regarding the estimates for the shifted Borda scores. We bound their magnitude, show that they are unbiased estimates, bound their instantaneous regret, and bound their second moment.

We first bound the magnitude of the estimates $\ts_t(i)$, using the fact that $q_t(j)\geq \gamma/K$.
\begin{restatable}{lem}{sbdd}
\label{lem:s01}
For all $t \in [T], i \in [K]$ it holds that $\ts_t(i) \leq K/\gamma^2$.
\end{restatable}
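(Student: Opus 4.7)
The plan is to unpack the definition of $\ts_t(i)$ and use the uniform exploration floor $q_t(j) \geq \gamma/K$ (which comes from the mixing step $q_{t+1}(i) = (1-\gamma)\tq_{t+1}(i) + \gamma/K$) on the two probabilities that appear in the denominator.

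First I would observe that the inner sum in the definition of $\ts_t(i)$ collapses. Since $y_t$ takes exactly one value in $[K]$, the indicators $\1(y_t = j)$ are all zero except for $j = y_t$, so
\begin{equation*}
\ts_t(i) = \frac{\1(x_t = i)}{K q_t(i)} \cdot \frac{o_t(x_t,y_t)}{q_t(y_t)}.
\end{equation*}
Next, since $o_t(x_t,y_t) \in \{0,1\}$ and each indicator is at most $1$, we have the deterministic bound
\begin{equation*}
\ts_t(i) \leq \frac{1}{K q_t(i)} \cdot \frac{1}{q_t(y_t)}.
\end{equation*}

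Finally, I would invoke the exploration lower bound: by construction $q_t(j) \geq \gamma/K$ for every $j \in [K]$ and every $t$, in particular for $j = i$ and $j = y_t$. Substituting gives
\begin{equation*}
\ts_t(i) \leq \frac{1}{K \cdot (\gamma/K)} \cdot \frac{1}{\gamma/K} = \frac{1}{\gamma} \cdot \frac{K}{\gamma} = \frac{K}{\gamma^2},
\end{equation*}
which is the claimed bound.

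There is no real obstacle here; the only slightly delicate point is noticing that the sum over $j$ contains only one nonzero summand, after which the proof is a one-line substitution. I would keep the argument short and present it essentially as above.
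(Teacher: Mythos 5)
Your proof is correct and follows exactly the same route as the paper, which simply notes that the bound follows from the definition of $\ts_t(i)$ together with the exploration floor $\q_t(i) \ge \gamma/K$. Your write-up just makes explicit the collapse of the sum over $j$ and the two substitutions, which the paper leaves implicit.
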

Next, we show that $\ts_t(i)$ is an unbiased estimate of the shifted Borda score $s_t(i)$. 
\begin{restatable}{lem}{estborda}
\label{lem:est_borda}
For all $t \in [T], i \in [K]$ it holds that  $\E[\ts_t(i)] = s_t(i)$.
\end{restatable}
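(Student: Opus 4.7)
The plan is to compute $\E[\ts_t(i)]$ directly by exploiting the independence of the two samples $x_t, y_t \sim q_t$ and the conditional law of the observed preference. I would first condition on the pair $(x_t, y_t)$: since $o_t(x_t,y_t) \sim \operatorname{Ber}(\P_t(x_t,y_t))$, we have $\E[o_t(x_t,y_t) \mid x_t, y_t] = \P_t(x_t,y_t)$. Pulling this inside the estimator, what remains is a sum of indicators divided by the sampling probabilities, which is the usual importance-weighted construction.

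More concretely, I would write
\begin{align*}
\E[\ts_t(i)]
&= \E\!\left[ \frac{\1(x_t=i)}{K\,q_t(i)} \sum_{j \in [K]} \frac{\1(y_t=j)\,o_t(x_t,y_t)}{q_t(j)} \right] \\
&= \frac{1}{K} \sum_{j \in [K]} \frac{1}{q_t(i)\,q_t(j)} \,\E\!\left[ \1(x_t=i)\,\1(y_t=j)\,o_t(x_t,y_t) \right].
\end{align*}
Using independence of $x_t$ and $y_t$ (they are drawn i.i.d.\ from $q_t$) together with $\E[o_t(x_t,y_t) \mid x_t=i, y_t=j] = \P_t(i,j)$, the inner expectation equals $q_t(i)\,q_t(j)\,\P_t(i,j)$. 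The $q_t$ factors cancel, yielding $\E[\ts_t(i)] = \frac{1}{K}\sum_{j\in[K]} \P_t(i,j) = s_t(i)$, as required.

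There is no real obstacle here; the only subtlety is the justification for cancelling $q_t(i)$ and $q_t(j)$, which requires $q_t(i), q_t(j) > 0$, and this is guaranteed by the $\gamma$-uniform mixing step in \cref{alg:dexp3} that forces $q_t(i) \geq \gamma/K$ for all $i \in [K]$. Note also that the case $j = i$ (where $x_t = y_t$) poses no problem: the estimator uses two independent draws, so the event $\{x_t = i, y_t = i\}$ has probability $q_t(i)^2$, and the diagonal term contributes $\P_t(i,i) = \tfrac12$ to $s_t(i)$, consistent with the definition.
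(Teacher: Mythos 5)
Your proof is correct and follows essentially the same route as the paper's: condition on the pair $(x_t,y_t)$, replace $o_t$ by $\P_t(x_t,y_t)$, and let the importance weights $q_t(i)q_t(j)$ cancel against the sampling probabilities. Your explicit remark that the diagonal term $j=i$ contributes $\P_t(i,i)=\tfrac12$ is a welcome clarification, since the shifted score $s_t(i)$ is defined as the sum over \emph{all} $j\in[K]$.
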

Let $\cH_{t-1} := (\q_{1},\P_1,(x_1,y_1),o_1, \ldots \q_{t},\P_t)$ denotes the history up to time $t$.
We compute the expected instantaneous regret at time $t$ as a function of the true shifted Borsda scores at time $t$. 
\begin{restatable}{lem}{exploss}
\label{lem:exp_loss}
$\E_{\cH_t}[\q_t^\top\ts_t] = \E_{\cH_{t-1}}\big[\E_{x \sim \q_t} [s_t(x) \mid \cH_{t-1}]\big], \,\forall t \in [T]$.
\end{restatable}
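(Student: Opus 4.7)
The plan is to compute the inner product $\q_t^\top \ts_t$ explicitly, simplify using the indicators, and then peel off the randomness in the order (i) the Bernoulli outcome $o_t$, (ii) the sample $y_t$, and (iii) the sample $x_t$, with $\q_t$ (and $\P_t$) held fixed under $\cH_{t-1}$.

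First I would write
\begin{align*}
\q_t^\top \ts_t
= \sum_{i \in [K]} \q_t(i) \cdot \frac{\1(x_t = i)}{K \q_t(i)} \sum_{j \in [K]} \frac{\1(y_t = j) \, o_t(x_t,y_t)}{\q_t(j)}
= \frac{1}{K} \sum_{j \in [K]} \frac{\1(y_t=j)\, o_t(x_t,y_t)}{\q_t(j)},
\end{align*}
where the cancellation $\q_t(i) / (K\q_t(i)) = 1/K$ is legitimate because $\q_t(i) \geq \gamma/K > 0$ by the $\gamma$-uniform exploration, and $\sum_i \1(x_t = i) = 1$. Note $\q_t$ and $\P_t$ are both measurable with respect to $\cH_{t-1}$ (since preferences are oblivious and $\q_t$ depends only on past samples).

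Next, conditioning on $\cH_{t-1}$ and on $(x_t, y_t)$, the feedback satisfies $\E[o_t(x_t,y_t) \mid x_t, y_t, \cH_{t-1}] = \P_t(x_t,y_t)$. Therefore
\begin{align*}
\E\!\left[\q_t^\top \ts_t \,\middle|\, \cH_{t-1}\right]
= \frac{1}{K}\, \E_{x_t, y_t \sim \q_t}\!\left[ \sum_{j \in [K]} \frac{\1(y_t = j)\, \P_t(x_t,y_t)}{\q_t(j)} \,\middle|\, \cH_{t-1}\right]
= \frac{1}{K}\, \E_{x_t,y_t \sim \q_t}\!\left[ \frac{\P_t(x_t, y_t)}{\q_t(y_t)} \,\middle|\, \cH_{t-1}\right].
\end{align*}
Now I would take the expectation over $y_t \sim \q_t$ first, which gives $\sum_{y \in [K]} \q_t(y) \cdot \P_t(x_t,y)/\q_t(y) = \sum_{y \in [K]} \P_t(x_t, y) = K s_t(x_t)$ by definition of the shifted Borda score. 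What remains is $\E_{x_t \sim \q_t}[ s_t(x_t) \mid \cH_{t-1}]$, which is exactly $\E_{x \sim \q_t}[s_t(x) \mid \cH_{t-1}]$.

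Finally, taking an outer expectation over $\cH_{t-1}$ (and noting $\cH_t$ adds only the extra randomness we have just averaged out) yields the claim $\E_{\cH_t}[\q_t^\top \ts_t] = \E_{\cH_{t-1}}[\E_{x \sim \q_t}[s_t(x) \mid \cH_{t-1}]]$. There is no real obstacle here; the main care is in maintaining the correct order of conditional expectations and in using $\q_t(j) > 0$ (guaranteed by the uniform-exploration floor $\gamma/K$) before cancelling denominators. This is essentially a ``double importance-weighting'' identity specialized to the dueling setup, and it dovetails with \cref{lem:est_borda} to say that $\q_t^\top \ts_t$ is, in expectation, the shifted-Borda reward of the randomly drawn action $x \sim \q_t$.
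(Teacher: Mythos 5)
Your proof is correct and follows essentially the same route as the paper's: condition on $\cH_{t-1}$, use that $\q_t$ and $\P_t$ are $\cH_{t-1}$-measurable, and apply the tower property over $(o_t, y_t, x_t)$. The only cosmetic difference is that you cancel $\q_t(i)$ against the importance weight to collapse the sum over $i$ up front, whereas the paper simply invokes the conditional unbiasedness $\E[\ts_t(i)\mid\cH_{t-1}]=s_t(i)$ from \cref{lem:est_borda} inside the sum; both yield the identity immediately.
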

Finally, We bound the second moment of our estimates.
\begin{restatable}{lem}{varborda}
At any time $t \in [T]$,
\label{lem:var_borda}
$
\E\big[ \sum_{i = 1}^K \q_t(i)\ts_t(i)^2  \big] \leq K/\gamma.
$
\end{restatable}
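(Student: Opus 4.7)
The plan is to compute the sum $\sum_i q_t(i) \tilde s_t(i)^2$ explicitly by exploiting the fact that the importance-sampling estimate $\tilde s_t(i)$ contains two indicator variables, each of which collapses a sum to a single term. Then I would take the conditional expectation over the independent draws $x_t, y_t \sim q_t$ (given history $\cH_{t-1}$) and use the uniform lower bound $q_t(j) \geq \gamma/K$ that comes from the $\gamma$-uniform exploration in \cref{alg:dexp3}.

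Concretely, first I would simplify the inner sum: since $y_t$ takes exactly one value, $\sum_{j} \1(y_t=j)\, o_t/q_t(j) = o_t/q_t(y_t)$. Hence
\begin{align*}
\ts_t(i)^2
= \frac{\1(x_t=i)}{K^2 q_t(i)^2}\cdot \frac{o_t^2}{q_t(y_t)^2}
\leq \frac{\1(x_t=i)}{K^2 q_t(i)^2 q_t(y_t)^2},
\end{align*}
using $o_t^2 \leq 1$. Multiplying by $q_t(i)$ and summing over $i$, the remaining indicator $\1(x_t=i)$ picks out a single term, yielding
\begin{align*}
\sum_{i=1}^K q_t(i)\, \ts_t(i)^2 \leq \frac{1}{K^2\, q_t(x_t)\, q_t(y_t)^2}.
\end{align*}

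Next, conditioning on $\cH_{t-1}$ and taking expectation over $x_t,y_t$ i.i.d.\ $\sim q_t$:
\begin{align*}
\E\!\left[\frac{1}{K^2 q_t(x_t) q_t(y_t)^2}\,\Big|\,\cH_{t-1}\right]
= \frac{1}{K^2}\sum_{x,y}q_t(x)q_t(y)\cdot\frac{1}{q_t(x)q_t(y)^2}
= \frac{1}{K}\sum_{y=1}^K \frac{1}{q_t(y)}.
\end{align*}
Applying $q_t(y) \geq \gamma/K$ gives $\sum_y 1/q_t(y) \leq K^2/\gamma$, so the conditional expectation is at most $K/\gamma$. Taking outer expectation over $\cH_{t-1}$ and applying the tower rule finishes the proof.

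No step here is an obstacle; the only subtlety worth flagging is being careful that the two indicators collapse \emph{both} sums (the one hidden inside $\tilde s_t(i)$ via $\1(y_t=j)$, and the outer one over $i$ via $\1(x_t=i)$), so no cross terms survive and the bound factorizes cleanly into $1/(q_t(x_t)q_t(y_t)^2)$ before averaging.
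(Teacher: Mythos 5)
Your proof is correct and follows essentially the same route as the paper's: bound $o_t^2\le 1$, average over the i.i.d.\ draws $x_t,y_t\sim \q_t$ to get $\tfrac1K\sum_y 1/\q_t(y)$, and finish with $\q_t(y)\ge\gamma/K$. The only cosmetic difference is that you collapse the indicator sums before squaring, whereas the paper expands the double sum explicitly; the computation is identical.
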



\noindent{\bf Proof overview.}
We upper bound $R_T^s$, the shifted Borda score regret, and recall that $R_T = \frac{K}{K-1} R^s_T$.
Note that  $\E_{\cH_T}[s_t(x_t)+s_t(y_t)] 
= \E_{\cH_{t-1}}\big[ \E_{x \sim \q_t}[2s_t(x) \mid \cH_{t-1}] \big]$, since $x_t$ and $y_t$ are i.i.d. Further note that we can write  
\begin{align*}
    \E_{\cH_T}[R_T^s] 
    = \E_{\cH_T}\!\!\left[ \sum_{t=1}^T [ s_t(i^*) - \tfrac12 (s_t(x_t)+s_t(y_t))] \right]
    \!=\! \max_{k \in [K]}\E_{\cH_T}\!\!\left[ \sum_{t=1}^T [s_t(k) - \tfrac12 ({s_t(x_t)+s_t(y_t)})] \right]
    \!\!,
\end{align*}
where the last equality holds since we assume the $\P_t$ are chosen obliviously and so $i^*$ does not depend on the learning algorithm. 
Thus we can rewrite:
\begin{align*}
    \E_{\cH_T}[R_T^s] 
    = \max_{k \in [K]}\left[ \sum_{t = 1}^T s_t(k) - \sum_{t = 1}^T \E_{\cH_{t-1}}[ \E_{x \sim \q_t}[s_t(x) \mid \cH_{t-1}] ] \right]
    .
\end{align*}
Now, as $\eta\ts_t(i) \le \eta K/\gamma^2$ (from \cref{lem:s01}), for any $\gamma \ge \sqrt{\eta K}$ and $\eta > 0$ we have $\eta\ts_t(i) \in [0,1]$.
From the regret guarantee of standard \emph{Exponential Weights} algorithm~\cite{auer02} over the completely observed fixed sequence of reward vectors $\ts_1, \ts_2, \ldots \ts_T$ we have for any $k \in [K]$:
\[
\sum_{t = 1}^T \ts_t(k) - \sum_{t = 1}^T \tq_t^\top \ts_t 
\leq 
\frac{\log K}{\eta} + \eta \sum_{t = 1}^T \sum_{i = 1}^K \tq_t(i)\ts_t(i)^2 
.
\]
%
Note that $\tq_t := (\q_t - \frac{\gamma}{K})/(1-\gamma)$. Let $i^* = \arg\max_{k \in [K]}\sum_{t = 1}^T s_t(k)=\arg\max_{k \in [K]}\sum_{t = 1}^T b_t(k)$. Taking expectation on both sides of the above inequality for $k= i^*$, we get:
\begin{align*}
\nonumber & (1-\gamma) \sum_{t = 1}^T \E_{\cH_T}[\ts_t(i^*)] - \sum_{t = 1}^T \E_{\cH_T}[\q_t^\top\ts_t] \le \frac{\log K}{\eta} + \E_{\cH_T}\bigg[\eta \sum_{t = 1}^T \sum_{i = 1}^K \q_t(i)\ts_t(i)^2\bigg],
\end{align*}
which by applying \cref{lem:est_borda}, \cref{lem:exp_loss} and \cref{lem:var_borda} and that $s_t(k^*) \le 1$, $\gamma = \sqrt{\eta K}$, we have
\begin{align*}
\E_{\cH_T}[R_T^s] \le 2T\sqrt{\eta K} + \frac{\log K}{\eta}
\quad\implies\quad
\E_{\cH_T}[R_T^s] \le 3(K\log K)^{1/3}T^{2/3},
\end{align*} 
where the implication follows by optimizing over $\eta$. The theorem  follows since $R_T = \tfrac{K}{K-1} R_T^s\leq 2R_T^s$. 
A complete proof is given in \cref{app:borda}.



\subsection{High Probability Regret Analysis}
\label{sec:alg_borda_whp}

We can show that a slightly modified version of \algdexp\, can lead to a high probability regret bound for the same setup. (This is inspired by the EXP3.P algorithm~\cite{AuerCFS02}.)
%
The modified algorithm runs almost identically to that of \cref{alg:dexp3}, except we now use a different score estimate $s'_t(i)$ in place of $\ts_t(i)$, where $s'_t(i)=\ts_t(i)+\beta/\q_t(i)$, where $\beta \in (0,1)$ is a tuning parameter. 
The items weights $\q_{t} \in \Delta{[K]}$ are now similarly updated using an exponential weight update on these modified score estimates along with an $\gamma$-uniform exploration. The complete algorithm is described in \cref{alg:dexp3_whp}.

\begin{center}
\begin{algorithm}[ht]
   \caption{\textbf{\algdexp ~(for High Probability Regret Bound)} }
   \label{alg:dexp3_whp}
\begin{algorithmic}[1]
   \STATE {\bfseries Input:} ~~ Item set: $[K]$, learning rate $\eta > 0$, parameters $\beta \in (0,1)$, $\gamma \in (0,1)$ 
   \STATE {\bfseries Initialize:} Initial probability distribution $\q_1(i) = \frac{1}{K}, ~\forall i \in [K]$
   \WHILE {$t = 1, 2, \ldots$}
   \STATE Sample $x_t, y_t \overset{iid}{\sim} \q_t$ (with replacement) 
   \STATE Receive preference $o_{t}(x_t,y_t) \sim \text{Ber}(\P_t(x_t,y_t))$
   \STATE Compute $\forall~i \in [K]$: 
   \begin{align*}
        s'_t(i) 
        = \frac{\1(x_t = i)}{K\q_t(i)}\sum_{j \in [K]}\frac{\1(y_t = j) o_t(x_t,y_t)}{\q_t(j)} + \frac{\beta}{\q_t(i)}
   \end{align*}
   \STATE Update $\forall~i \in [K]$: 
   $
       \tq_{t+1}(i) 
       = 
       \dfrac{\exp(\eta \sum_{\tau = 1}^t s'_\tau(i))}{\sum_{j = 1}^{K} \exp(\eta \sum_{\tau = 1}^t s'_\tau(j))}
        \quad;\quad
       q_{t+1}(i) 
       = 
       (1-\gamma)\tq_{t+1}(i) + \frac{\gamma}{K}
   $
   \ENDWHILE
\end{algorithmic}
\end{algorithm}
\vspace{-2pt}
\end{center}


We now prove a high probability regret bound for \cref{alg:dexp3_whp}:

\begin{restatable}{thm}{ubdexpwhp}
\label{thm:ub_dexp_whp}
Given any $T$ and $\delta>0$, there exists a setting of $\gamma$, $\beta$ and $\eta$, such that
with probability at least $1-\delta$, the regret of the modified D-EXP3 algorithm is
$
R_T = \Tilde{O}(K^{1/3}T^{2/3}) 
$,
\end{restatable}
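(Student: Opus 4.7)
The plan is to adapt the EXP3.P high-probability analysis of \cite{AuerCFS02} to the dueling setting. The upward bias $\beta/\q_t(i)$ added to $\ts_t(i)$ is the standard device that makes the partial sums $\sum_\tau s'_\tau(i)$ concentrate from above on the true cumulative shifted Borda scores $\sum_\tau s_\tau(i)$, so that an expert-style regret bound on the biased estimates translates into a high-probability bound on the true regret.

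Concretely, I would proceed in the following steps. (i) Use a moment-generating-function argument to show that for every arm $i\in[K]$, $\sum_t s_t(i) \le \sum_t s'_t(i) + O(\beta^{-1}\log(K/\delta))$ with probability at least $1-\delta/K$, and union-bound over $i$. (ii) Apply the standard exponential-weights regret guarantee to the sequence $s'_1,\ldots,s'_T$ (nonnegative and bounded by $K/\gamma^2+\beta K/\gamma$), obtaining
\[
\sum_t s'_t(i^*) - \sum_t \tq_t^\top s'_t \;\le\; \frac{\log K}{\eta} + \eta\sum_t\sum_i \tq_t(i)(s'_t(i))^2.
\]
(iii) Pass from $\tq_t$ to $\q_t$ using the $\gamma$-exploration, and apply the identity $\sum_t \q_t^\top s'_t = \sum_t \q_t^\top\ts_t + \beta KT$ (which follows from $\sum_i \q_t(i)/\q_t(i) = K$). (iv) Convert $\sum_t \q_t^\top\ts_t$ into $\sum_t \E_{x\sim\q_t}[s_t(x)\mid\cH_{t-1}]$ and then into $\sum_t \tfrac12(s_t(x_t)+s_t(y_t))$ via two applications of Freedman's inequality, which exploit the conditional unbiasedness (\cref{lem:est_borda} and \cref{lem:exp_loss}) and the second-moment bound (\cref{lem:var_borda}). (v) Bound the quadratic term $\sum_t\sum_i\tq_t(i)(s'_t(i))^2$ with high probability via a similar Freedman argument, whose conditional mean is controlled by an adaptation of \cref{lem:var_borda} and whose per-step range follows from \cref{lem:s01}.

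The main obstacle is step (i), the MGF calculation, because $\ts_t(i)$ is a bilinear function of two independent draws $x_t,y_t\sim\q_t$ combined with a Bernoulli outcome $o_t$, rather than the linear importance-weighted estimator in standard EXP3.P. Conditional on $\cH_{t-1}$ one has $0\le \ts_t(i)\le K/\gamma^2$ (\cref{lem:s01}), $\E[\ts_t(i)\mid\cH_{t-1}]=s_t(i)$ (\cref{lem:est_borda}), and a conditional second-moment bound of the form $\E[\ts_t(i)^2\mid\cH_{t-1}]\le K/(\gamma \q_t(i))$ obtained by the same computation that yields \cref{lem:var_borda}. Using the elementary inequality $e^{-x}\le 1-x+x^2$ for $x\ge 0$ together with $e^{\beta s_t(i)}\le 1+\beta+O(\beta^2)$, these facts let one show $\E[\exp(\beta(s_t(i) - \ts_t(i)) - \beta^2/\q_t(i))\mid\cH_{t-1}]\le 1$ for $\beta$ below a small absolute constant. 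The cumulative product $\prod_\tau\exp(\beta(s_\tau(i)-\ts_\tau(i))-\beta^2/\q_\tau(i))$ is then a supermartingale with expectation at most $1$, and a Chernoff--Markov step yields the desired one-sided deviation inequality.

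Combining the pieces gives error contributions of order $\log K/\eta$, $\eta TK/\gamma$, $\gamma T$, $\beta KT$, and $\beta^{-1}\log(K/\delta)$, together with $\tilde O(\sqrt{T\log(1/\delta)})$ martingale deviations. Choosing $\eta\asymp(\log K)^{2/3}T^{-2/3}K^{-1/3}$, $\gamma\asymp\sqrt{\eta K}$, and $\beta\asymp\sqrt{\log(K/\delta)/(KT)}$ balances these terms and yields $R_T = \tilde O(K^{1/3}T^{2/3}\sqrt{\log(K/\delta)})$ with probability at least $1-\delta$, as claimed.
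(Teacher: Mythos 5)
Your proposal follows essentially the same route as the paper: an EXP3.P-style upward bias $\beta/\q_t(i)$, an MGF/supermartingale argument for one-sided concentration of $\sum_t s'_t(i)$ around $\sum_t s_t(i)$ (the paper's \cref{lem:est_borda_acon}), the exponential-weights bound on the biased estimates, the identity $\q_t^\top s'_t=\q_t^\top\ts_t+K\beta$, and the same parameter balancing. One quantitative slip in your step (i) deserves attention: the conditional second moment of $\ts_t(i)$ is only bounded by $\E[\ts_t(i)^2\mid\cH_{t-1}]\le\tfrac{1}{K^2\q_t(i)}\sum_j\tfrac{1}{\q_t(j)}\le\tfrac{1}{\gamma\,\q_t(i)}$ (and can genuinely be of that order when $\q_t$ is far from uniform), so the compensator $\beta^2/\q_t(i)$ in your claimed inequality $\E[\exp(\beta(s_t(i)-\ts_t(i))-\beta^2/\q_t(i))\mid\cH_{t-1}]\le 1$ is too small by a factor of $1/\gamma$. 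The paper repairs exactly this by running the MGF with parameter $\beta'=\gamma\beta$, so that the required compensator $\beta'^2/(\gamma\q_t(i))$ coincides with the injected bias $\beta/\q_t(i)$; the resulting deviation is $(\gamma\beta)^{-1}\log(K/\delta)$ rather than your $\beta^{-1}\log(K/\delta)$, which changes the optimal $\beta$ (the paper takes $\beta\asymp\sqrt{\log(K/\delta)}\,\eta^{-1/4}K^{-3/4}T^{-1/2}$) but not the final $\tilde O(K^{1/3}T^{2/3})$ rate. Two smaller points of comparison: the paper bounds the quadratic term deterministically via $\q_t(i)s'_t(i)\le\beta+\gamma^{-1}$ and absorbs it into a $\gamma\sum_t s'_t(i^*)$ term rather than using Freedman as in your step (v); and your step (iv), which applies Freedman to pass from $\q_t^\top\ts_t$ to $\tfrac12(s_t(x_t)+s_t(y_t))$, is actually more careful than the paper, which at that point simply substitutes the conditional expectation $s_t(x_t)+K\beta$ without an explicit concentration step. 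With the compensator fixed, your argument goes through and yields the stated high-probability bound.
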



The proof builds on the following steps.
Similarly to our estimates $\ts_t(i)$ above, we can show the following properties.


\begin{restatable}{lem}{sbddwhp}
\label{lem:s02}
For any item $i$ and round $t$, we have $s'_t(i) \leq K/\gamma^2 + K \beta/\gamma$.
\end{restatable}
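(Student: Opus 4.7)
The plan is to decompose $s'_t(i)$ into its two constituent pieces and bound each one separately using the uniform exploration floor on $\q_t$. By construction we have
\[
    s'_t(i) \;=\; \ts_t(i) \;+\; \frac{\beta}{\q_t(i)},
\]
so it suffices to upper bound each of these two terms by $K/\gamma^2$ and $K\beta/\gamma$ respectively.

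The first term is immediate: by Lemma~\ref{lem:s01}, applied verbatim (since $\ts_t(i)$ in \cref{alg:dexp3_whp} is defined by exactly the same formula as in \cref{alg:dexp3}, and the uniform-exploration floor $\q_t(j) \ge \gamma/K$ still holds for every $j$), we obtain $\ts_t(i) \le K/\gamma^2$.

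The second term is a direct consequence of the $\gamma$-uniform mixing step $\q_t(i) = (1-\gamma)\tq_t(i) + \gamma/K$, which forces $\q_t(i) \ge \gamma/K$ for every $i \in [K]$; therefore $\beta/\q_t(i) \le K\beta/\gamma$. Summing the two bounds yields the claim. There is no real obstacle here — the only thing to note is that the argument is purely deterministic (it holds for every realization of $x_t, y_t, o_t$), which is exactly what we will need later when invoking this bound inside the high-probability concentration arguments for \cref{thm:ub_dexp_whp}.
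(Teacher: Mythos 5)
Your proof is correct and follows the same route as the paper's (which simply invokes the definition of $s'_t(i)$ together with the exploration floor $\q_t(i) \ge \gamma/K$); you merely spell out the two-term decomposition explicitly. No issues.
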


\begin{restatable}{lem}{estbordawhp}
\label{lem:est_borda_whp}
For any item $i$ and round $t$, it holds that $\E[s'_t(i) \mid \cH_{t-1} ] = s_t(i) + \beta/\q_t(i)$.
\end{restatable}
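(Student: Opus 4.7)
The plan is to observe that $s'_t(i)$ decomposes cleanly as $s'_t(i) = \ts_t(i) + \beta/\q_t(i)$, where $\ts_t(i)$ is exactly the shifted-Borda score estimator of \cref{alg:dexp3} already analyzed in \cref{lem:est_borda}. Since $\q_t$ is $\cH_{t-1}$-measurable (it is determined by the sampling and observations up through round $t-1$), the bias term $\beta/\q_t(i)$ survives the conditional expectation unchanged. So the only real work is to verify the unbiasedness $\E[\ts_t(i)\mid \cH_{t-1}] = s_t(i)$ at the conditional-expectation level, which is essentially the content of \cref{lem:est_borda}.

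First I would write $\E[s'_t(i) \mid \cH_{t-1}] = \E[\ts_t(i) \mid \cH_{t-1}] + \beta/\q_t(i)$. For the first term I would use the tower property with the two sources of randomness conditional on $\cH_{t-1}$: the i.i.d.\ draws $x_t, y_t \sim \q_t$ and the Bernoulli observation $o_t$. Inner step: given $(x_t, y_t)$ and $\cH_{t-1}$, we have $\E[o_t(x_t,y_t) \mid x_t, y_t, \cH_{t-1}] = \P_t(x_t, y_t)$. Outer step: using independence of $x_t$ and $y_t$, the factor $\1(x_t = i)/(K\q_t(i))$ averages over $x_t$ to $1/K$, while independently the sum $\sum_j \1(y_t = j)\P_t(i,j)/\q_t(j)$ averages over $y_t$ to $\sum_j \q_t(j)\P_t(i,j)/\q_t(j) = \sum_j \P_t(i,j)$. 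Combining gives $\E[\ts_t(i)\mid \cH_{t-1}] = \frac{1}{K}\sum_{j \in [K]} \P_t(i,j) = s_t(i)$, so $\E[s'_t(i)\mid \cH_{t-1}] = s_t(i) + \beta/\q_t(i)$, as claimed.

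There is no real obstacle: this is a short bookkeeping argument that leans on linearity of expectation, the tower property, and the independence of $x_t$ and $y_t$. The only subtleties to watch for are (i) treating all $\q_t$-dependent quantities as deterministic under $\E[\cdot \mid \cH_{t-1}]$, and (ii) correctly factoring the two indicator-weighted terms so that the $x_t$-average and $y_t$-average separate cleanly before one multiplies the $\P_t(i,j)$ factor back in.
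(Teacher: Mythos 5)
Your proposal is correct and follows essentially the same route as the paper: decompose $s'_t(i)=\ts_t(i)+\beta/\q_t(i)$, pull the $\cH_{t-1}$-measurable bias term out of the conditional expectation, and reduce the rest to the unbiasedness of $\ts_t(i)$ established in \cref{lem:est_borda}. The paper simply cites \cref{lem:est_borda} for that last step rather than re-deriving the tower-property computation as you do, but the argument is identical.
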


However, unlike $\ts_t(i)$, the adjusted score estimates $s'_t(i)$ are \emph{no longer unbiased} for the true scores $s_t(i)$, and are larger in expectation by $\beta$.
Nevertheless, this does not hurt the regret analysis  
as its key element lies in showing that for any item $i \in [K]$, the cumulative estimated scores are not too far from the accumulated true scores. 
Precisely, the next lemma
ensures a high confidence upper bound on the cumulative scores $\sum_{t = 1}^Ts_t(i)$ and thus we can upper bound the learners performance in terms of estimated scores $s'_t$ (instead of $s_t$). 

\begin{restatable}{lem}{aconcestborda}
\label{lem:est_borda_acon}
For any $i \in [K]$, $\delta \in (0,1)$ and $\beta, \gamma \in (0,1)$, with probability at least $1-\delta$, we have
$$
\sum_{t = 1}^T s'_t(i) \ge \sum_{t = 1}^T s_t(i) - \frac{1}{\gamma\beta} \log\frac{1}{\delta}. 
$$
\end{restatable}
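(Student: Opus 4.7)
The proof plan is to reduce the statement to an exponential supermartingale argument, closely following the high-probability analysis for EXP3.P of Auer et al. Since $s'_t(i) = \ts_t(i) + \beta/\q_t(i)$, the claim is equivalent to
\[
\sum_{t=1}^T \bigl(s_t(i) - \ts_t(i) - \beta/\q_t(i)\bigr) \;\le\; \tfrac{1}{\gamma\beta}\log(1/\delta)
\]
with probability $\ge 1-\delta$. I will fix $\lambda := \gamma\beta$ and define the process
\[
M_0 := 1, \qquad
M_t := \exp\!\Bigl(\lambda\sum_{s=1}^t \bigl(s_s(i) - \ts_s(i) - \beta/\q_s(i)\bigr)\Bigr),
\]
and show that $(M_t)$ is a supermartingale with respect to $(\cH_t)$. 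Given $\E[M_T]\le 1$, Markov's inequality applied to $M_T \geq 1/\delta$ immediately yields the stated bound.

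The crux is bounding $\E[M_t/M_{t-1}\mid \cH_{t-1}]$. Noting that $s_t(i)$ and $\q_t(i)$ are $\cH_{t-1}$-measurable and that $\lambda\ts_t(i)\ge 0$, I plan to apply the elementary inequality $e^{-x}\le 1-x+\tfrac{x^2}{2}$ (valid for all $x\ge 0$) to the random factor $\exp(-\lambda\ts_t(i))$, followed by $1+y\le e^y$. Combined with \cref{lem:est_borda_whp} (so that $\E[\ts_t(i)\mid\cH_{t-1}]=s_t(i)$), this gives
\[
\E\bigl[e^{-\lambda\ts_t(i)}\mid\cH_{t-1}\bigr] \;\le\; \exp\!\Bigl(-\lambda s_t(i) + \tfrac{\lambda^2}{2}\E[\ts_t(i)^2\mid\cH_{t-1}]\Bigr).
\]
Plugging this back, $\E[M_t/M_{t-1}\mid\cH_{t-1}] \le \exp\bigl(\tfrac{\lambda^2}{2}\E[\ts_t(i)^2\mid\cH_{t-1}] - \tfrac{\lambda\beta}{\q_t(i)}\bigr)$.

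The remaining computation is to bound the conditional second moment. Since $x_t,y_t\sim\q_t$ are independent and $o_t\in\{0,1\}$, the estimator collapses to $\ts_t(i)=\1(x_t=i)\,o_t/(K\q_t(i)\q_t(y_t))$, and a direct calculation gives
\[
\E[\ts_t(i)^2\mid\cH_{t-1}] \;=\; \frac{1}{K^2\q_t(i)}\sum_{j\in[K]}\frac{\P_t(i,j)}{\q_t(j)} \;\le\; \frac{1}{\gamma\,\q_t(i)},
\]
where the last inequality uses $\P_t(i,j)\le 1$ and the uniform-exploration lower bound $\q_t(j)\ge \gamma/K$. Substituting into the previous display and using $\lambda=\gamma\beta$ yields an exponent of $\tfrac{1}{\q_t(i)}\bigl(\tfrac{\gamma\beta^2}{2}-\gamma\beta^2\bigr)=-\tfrac{\gamma\beta^2}{2\q_t(i)}\le 0$, so $\E[M_t\mid\cH_{t-1}]\le M_{t-1}$, as required.

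The main delicate point will be the second-moment calculation for $\ts_t(i)$, since unlike ordinary EXP3 the estimator involves two independently sampled arms and requires care to see that the doubly-inverse-probability weighting still concentrates via the exploration parameter $\gamma$; everything else is a standard Markov-plus-supermartingale maneuver. Once $\E[M_T]\le 1$ is in hand, applying Markov to $\Pr(M_T\ge 1/\delta)$ and taking logarithms (then dividing by $\lambda=\gamma\beta$) gives exactly the stated $\tfrac{1}{\gamma\beta}\log(1/\delta)$ slack.
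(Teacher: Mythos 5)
Your proposal is correct and follows essentially the same route as the paper: both bound the moment generating function of the deviation at scale $\gamma\beta$, control each conditional factor via a second-order expansion of the exponential together with the variance bound $\E[\ts_t(i)^2\mid\cH_{t-1}]\le 1/(\gamma\q_t(i))$, offset it against the deterministic $\exp(-\gamma\beta^2/\q_t(i))$ bias term, and finish with Markov's inequality. The only cosmetic differences are that you phrase the product of conditional expectations as a supermartingale and use $e^{-x}\le 1-x+x^2/2$ for $x\ge 0$ where the paper uses $e^{x}\le 1+x+x^2$ for $x\le 1$ (which obliges it to first verify that boundedness condition).
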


Incorporating this idea, the rest of the analysis closely follows that of \cref{thm:ub_dexp}.
%
%
See complete proof in \cref{appsub:borda_whp_proofs}.

\section{\gap~\prob}
\label{sec:alg_fxdgap}


In this section we study an adversarial setting with a fixed-gap of $\Delta>0$,
and give an algorithm with regret $O({(K\log (KT))}/{\Delta^2})$. 
%
In this case, our algorithm
is based on using confidence intervals of the \emph{estimated average Borda-scores}.
The algorithm has two phases. In the first phase, it samples uniformly at random two different items, and observes the outcome of their duel; in the second phase, it has a specific single item $\hat i$, which it uses in all rounds (for both items). 
The algorithm moves to its second phase when it detects an item $\hat i$ whose lower confidence bound ($LCB$) is larger than the upper confidence bound ($UCB$) of any other item $j$.
The complete description is given in \cref{alg:gap}.

Because of the non-stationary nature of the item preferences, and unlike classical action-elimination algorithms~\cite{Auer00,Even-DarMM06}, we still need to maintain an unbiased estimate of the Borda-score for every item at every round. 
(In contrast, in the stochastic dueling bandit problem~\cite{Zoghi+14RUCB}, for any fixed item $i \in [K]$, the unbiased estimate of its Borda score at round $t$ is also an unbiased estimate for any other round $s \neq t$; this simplifying condition does not hold in our fixed-gap adversarial model.)
Towards this, we maintain an estimate of the Borda score of any item $i \in [K]$ at any round $t$ as $\hb_{i}(t) := {K\1(x_t = i)}o_t(x_t,y_t)$, and show that it is an \emph{unbiased estimator}. 

\begin{restatable}{lem}{estbordagap}
\label{lem:est_borda_gap}
At any round $t$, we have $\E_{\cH_t}[\hb_{t}(i) ] = \b_{t}(i)$ for all $i \in [K]$.
\end{restatable}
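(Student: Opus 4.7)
The plan is to verify unbiasedness by a short conditional-expectation argument, exploiting the Phase~1 exploration distribution of \cref{alg:gap}. In that phase, $(x_t, y_t)$ is drawn uniformly at random from ordered pairs of distinct items in $[K]$, so conditional on the history $\cH_{t-1}$, for any $i \neq j$ we have $\Pr[x_t = i,\, y_t = j \mid \cH_{t-1}] = 1/(K(K-1))$. Moreover, conditional on $(x_t, y_t)$ and $\cH_{t-1}$, the outcome $o_t(x_t,y_t)$ is $\operatorname{Ber}(\P_t(x_t, y_t))$, so its conditional mean equals $\P_t(x_t, y_t)$.

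The computation I will then carry out is to apply the tower property, replace $o_t(x_t,y_t)$ by its conditional mean $\P_t(x_t,y_t)$, pull out the indicator $\1(x_t = i)$, and sum over the $K-1$ possible values of $y_t$:
\begin{align*}
\E[\hb_t(i) \mid \cH_{t-1}]
\;=\; K\, \E\bigl[\1(x_t = i)\, \P_t(x_t, y_t) \bigm| \cH_{t-1}\bigr]
\;=\; K \sum_{j \neq i} \frac{1}{K(K-1)}\, \P_t(i, j)
\;=\; \frac{1}{K-1}\sum_{j \neq i} \P_t(i, j)
\;=\; b_t(i).
\end{align*}
Taking one further expectation over $\cH_{t-1}$ yields the statement of the lemma.

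I do not anticipate any real analytical obstacle; this is simply the standard importance-weighted estimator computation, relying only on the uniformity of the Phase~1 sampling, the Bernoulli structure of $o_t$, and the law of iterated expectation. The only subtlety worth flagging is that unbiasedness is tied to Phase~1: in Phase~2 the algorithm commits to a single arm for both positions, so the same formula $K\1(x_t=i)\,o_t$ would no longer be unbiased for $b_t(i)$. This is harmless, however, because the $\hb_t(i)$ estimates are only invoked during Phase~1 to construct the $\mathrm{UCB}$/$\mathrm{LCB}$ confidence intervals that drive the transition rule to Phase~2.
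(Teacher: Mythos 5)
Your proof is correct and follows essentially the same route as the paper's: condition on $\cH_{t-1}$, replace $o_t(x_t,y_t)$ by its conditional mean $\P_t(x_t,y_t)$, and use the uniform sampling of distinct ordered pairs to recover $\frac{1}{K-1}\sum_{j\neq i}\P_t(i,j)=b_t(i)$. The only cosmetic difference is that you sum directly over the joint distribution of $(x_t,y_t)$ whereas the paper iterates the expectation over $y_t$ given $x_t$ and then over $x_t$; your closing remark about Phase~2 is a sensible (though unnecessary) clarification.
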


Thus, an unbiased estimate for the $t$-step average Borda score $\bb_t(i)$, is $\tb_{t}(i) := \tfrac1t \sum_{\tau = 1}^t\hb_{\tau}(i)$.
We further maintain confidence intervals $[LCB(i;t),UCB(i;t)]$ around each $\tb_{t}(i)$, within which the means $\bb_i(t)$ lie with high probability.
\begin{restatable}{lem}{gapconf}
\label{lem:gap_conf}
With probability $\geq 1-\delta$, we have $\bb_i(t) \in [LCB(i;t),UCB(i;t)]$ for all $i$ and $t$.
\end{restatable}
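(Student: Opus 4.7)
The plan is to apply a martingale concentration inequality to $\sum_{\tau=1}^{t}(\hb_\tau(i) - b_\tau(i))$ for each fixed item $i$, and then union-bound over items and rounds to produce the required confidence intervals $[LCB(i;t),UCB(i;t)]$.

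Fix $i \in [K]$. Because the adversary is oblivious (so $b_\tau(i)$ is a deterministic quantity), and because the pair $(x_\tau,y_\tau)$ drawn in Phase~1 is independent of the past, \cref{lem:est_borda_gap} strengthens to $\E[\hb_\tau(i) \mid \cH_{\tau-1}] = b_\tau(i)$. Hence $D_\tau := \hb_\tau(i) - b_\tau(i)$ is a martingale-difference sequence with respect to $\{\cH_\tau\}$. The increments are bounded: $\hb_\tau(i) = K\1(x_\tau=i)\,o_\tau(x_\tau,y_\tau) \in \{0,K\}$ and $b_\tau(i)\in[0,1]$, so $|D_\tau|\le K$. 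The conditional second moment is also easily bounded under uniform Phase~1 sampling: $\E[\hb_\tau(i)^2\mid \cH_{\tau-1}]\le K^2\,\Pr(x_\tau=i) \le K$.

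Next I would invoke Freedman's inequality (using the variance bound to get the tighter $K$-dependence) on $\sum_{\tau=1}^t D_\tau$: for any $\delta'\in(0,1)$, with probability at least $1-\delta'$,
\[
\bigl|\tb_t(i) - \bb_t(i)\bigr| \;\le\; c_t(\delta') \;:=\; O\!\left(\sqrt{\frac{K\log(1/\delta')}{t}} \,+\, \frac{K\log(1/\delta')}{t}\right).
\]
I would then union-bound over the $KT$ events indexed by $(i,t)\in[K]\times[T]$, taking $\delta' = \delta/(KT)$, and define $LCB(i;t) := \tb_t(i) - c_t(\delta/(KT))$ and $UCB(i;t) := \tb_t(i) + c_t(\delta/(KT))$, which yields the claim.

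The only non-routine element is verifying the martingale structure, and this is where the subtleties of the adversarial dueling model matter: zero-mean of $D_\tau$ given $\cH_{\tau-1}$ uses both the oblivious adversary (making $b_\tau(i)$ a non-random quantity) and the independence of the Phase-1 pair $(x_\tau,y_\tau)$ from the past. Once these are in place, the concentration step is standard; opting for Freedman rather than Azuma--Hoeffding preserves the $\sqrt{K}$ (instead of $K$) scaling in the radius, which is precisely what drives the final $\tO(K/\Delta^2)$ regret bound rather than a looser $\tO(K^2/\Delta^2)$.
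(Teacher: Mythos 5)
Your proposal is correct and follows essentially the same route as the paper, which applies Bernstein's inequality (rather than Freedman's, but since the Phase-1 pair is drawn uniformly and independently of the past the two coincide here) with the same conditional second-moment bound $\E[\hb_\tau(i)^2]\le K^2\Pr(x_\tau=i)=K$ and the same union bound over the $KT$ pairs $(i,t)$. The one detail to reconcile is that your radius $c_t$ carries an additive $K\log(KT/\delta)/t$ term while the algorithm's $LCB/UCB$ use only the $\sqrt{(K/t)\log(2KT/\delta)}$ term; the paper handles this by noting the interval is trivially valid (width exceeding $1$) for $t \lesssim K\log(2KT/\delta)$ and that the additive term is dominated by the square-root term for larger $t$.
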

The proof uses Bernstein's inequality to show that the estimates $\bb_i(t)$ are concentrated around their means $\tb_t(i)$, within the respective confidence intervals.
Assuming these confidence bounds hold, as soon as we find an item $\hat i \in [K]$ such that $LCB(\hat i;t) > UCB(j;t)$ for any other item~$j \neq \hat i$, we are guaranteed that $\hat i$ is the best item (in hindsight), i.e., $\hat i = i^*$. In the remaining rounds, $t+1,\ldots, T$, we play only  item $\hat i$ (for both items) and suffer no regret.
This results with the algorithm detailed in \cref{alg:gap}

\begin{restatable}{thm}{ubdgap}
\label{thm:ub_gap_whp}
Given any $\delta>0$, with probability at least $1-\delta$, the regret of \cref{alg:gap} (with parameter $\delta$) is upper bounded by
$
64 (K/\Delta^2) \log(2KT/\delta)
$.
\end{restatable}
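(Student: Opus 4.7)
The plan is to condition on the high-probability ``good event'' of \cref{lem:gap_conf}, namely that the true average Borda scores $\bar b_i(t)$ lie inside their respective confidence windows $[LCB(i;t), UCB(i;t)]$ simultaneously for all $i\in[K]$ and all $t\in[T]$. This event holds with probability at least $1-\delta$, and all subsequent arguments are made on this event.

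On the good event, I would first argue correctness of the elimination: whenever the algorithm enters Phase~2 with some item $\hat i$, we must have $\hat i = i^*$. Indeed, the entry condition is $LCB(\hat i;t) > UCB(j;t)$ for every $j\neq \hat i$, which together with the good event yields $\bar b_{\hat i}(t) \geq LCB(\hat i;t) > UCB(j;t) \geq \bar b_j(t)$ for all $j\neq \hat i$. The fixed-gap assumption then forces $\hat i=i^*$, so the instantaneous regret $r_t = b_t(i^*)-\tfrac12(b_t(x_t)+b_t(y_t)) = 0$ throughout Phase~2.

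Next I would bound the length of Phase~1. Since $\hat b_\tau(i) = K\mathbf{1}(x_\tau = i)o_\tau \in \{0,K\}$ with $\E[\hat b_\tau(i)^2] = K\, b_\tau(i) \le K$ (using \cref{lem:est_borda_gap} and that $x_\tau$ is uniform), Bernstein's inequality (as invoked in \cref{lem:gap_conf}) yields a confidence half-width of the form
\begin{equation*}
C_t \le \sqrt{\tfrac{2K\log(2KT/\delta)}{t}} + \tfrac{K\log(2KT/\delta)}{t},
\end{equation*}
so that $LCB(i;t) \ge \bar b_i(t) - 2C_t$ and $UCB(i;t) \le \bar b_i(t) + 2C_t$ on the good event. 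Consequently, whenever $4C_t \le \Delta$, we get $LCB(i^*;t) \ge \bar b_{i^*}(t) - 2C_t \ge \bar b_j(t) + \Delta - 2C_t \ge \bar b_j(t) + 2C_t \ge UCB(j;t)$ for every $j\neq i^*$, triggering the Phase~2 transition. A direct calculation shows this inequality is satisfied once $t \ge t_0 := 64 (K/\Delta^2)\log(2KT/\delta)$.

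Finally, I would sum the per-round regret. Since $b_t(\cdot)\in[0,1]$, the per-round regret is at most $1$ during Phase~1 and exactly $0$ during Phase~2. Hence $R_T \le t_0 \le 64 (K/\Delta^2)\log(2KT/\delta)$ on the good event, which has probability at least $1-\delta$, giving the claimed bound. The main subtlety lies in step three: verifying that the variance proxy in the Bernstein bound indeed scales like $K$ rather than $K^2$ (despite $\hat b_\tau(i)$ ranging up to $K$), since this is exactly what converts the naive $K^2/\Delta^2$ rate into the correct $K/\Delta^2$ rate and matches the lower bound.
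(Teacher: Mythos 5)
Your proposal is correct and follows essentially the same route as the paper's proof: condition on the good event of \cref{lem:gap_conf}, observe that the stopping rule can only fire on the true Borda winner (so Phase~2 incurs zero regret), show the rule must fire once the confidence half-width drops below $\Delta/4$, i.e.\ by round $t_0 = 64(K/\Delta^2)\log(2KT/\delta)$, and bound the per-round regret by $1$ before that. The variance observation you flag (that $\E[\hat b_\tau(i)^2]$ scales like $K$, not $K^2$) is exactly the content of the paper's proof of \cref{lem:gap_conf} via Bernstein's inequality, so nothing is missing.
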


We remark that unlike most MAB algorithms, we do not gain by incremental elimination. The reason is that we need to sample a second random item, $y_t$, which would have an expected Borda score which equals the average Borda score. This random item implies a constant regret per round until we identify $\hat i$. After we identify $\hat i$, with high probability, we do not incur any regret.

\vspace{-5pt}
\begin{figure}[H]
\begin{center}
\vspace{-10pt}
\begin{algorithm}[H]
   \caption{\textbf{\alggap \, (BCB)}}
   \label{alg:gap}
\begin{algorithmic}[1]
   \STATE {\bfseries Input:} item set indexed by $[K]$, confidence $\delta > 0$
   \FOR {$t = 1,\ldots,T$}
   \STATE Select $x_t, y_t \in [K]$, $x_t \neq y_t$ uniformly at random
   \STATE Receive preference $o_{t}(x_t,y_t) \sim \text{Ber}(\P_t(x_t,y_t))$
   \STATE Estimated score: $\hb_{i}(t) = K \, o_t(x_t,y_t) \, \1(x_t = i), ~\forall i \in [K]$		
   \STATE Estimated average score: $\tb_{t}(i) \leftarrow \frac{1}{t}\sum_{\tau = 1}^t\hb_{\tau}(i)$, $\forall i \in [K]$
  \STATE Compute:
  $
      LCB(i;t) = \tb_t(i) - 2\sqrt{\frac{K}{t}\log\frac{2KT}{\delta}}, \, 
      UCB(i;t) = \tb_t(i) + 2\sqrt{\frac{K}{t}\log\frac{2KT}{\delta}}
  $
   \STATE \textbf{if} $\exists ~ \hat i \in [K]$ s.t. $LCB(\hat i;t)> UCB(j;t)~~ \forall j \neq \hat i$, \textbf{then} break
   \ENDFOR
   \STATE Play $(\hat i,\hat i)$ for rest of the rounds $t+1, \ldots, T$. 
\end{algorithmic}
\end{algorithm}
\vspace{-10pt}
\end{center}
\end{figure}
%
\vspace{-10pt}


\vspace{-10pt}

\section{Lower Bounds}
\label{sec:lb}

This section derives lower bounds for the adversarial dueling bandit settings. 
\cref{thm:lb_gap} and \cref{thm:lb} respectively give the regret lower bound for fixed gap and general adversarial setting. We first prove the following key lemma before proceeding to the individual lower bounds:

\begin{restatable}[]{lem}{lb}
\label{lem:lb}
For the problem of \prob\, with \obj\, objective, for any learning algorithm $\cA$ and any $\epsilon \in (0,0.1]$, there exists a problem instance (sequence of preference matrices $\P_1,\P_2,\ldots,\P_T$) such that the expected regret incurred by $\cA$ on that instance is at least  $\Omega(\min(\epsilon T, {K}/{\epsilon^2}))$, for any $K \ge 4$. 
\end{restatable}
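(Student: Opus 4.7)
The plan is to prove the bound via a standard information-theoretic argument against a family of $K$ stationary hard preference instances indexed by the identity of the Borda winner. For each $i \in [K]$, I construct the stationary preference matrix $P_i$ with $P_i(i,j) = \tfrac12 + \epsilon$ for $j \neq i$ and $P_i(j,k) = \tfrac12$ for all other $j \neq k$ (other entries fixed by antisymmetry $P(j,i) = 1 - P(i,j)$), and use $P_i$ at every round. A direct computation gives $b(i) = \tfrac12 + \epsilon$ and $b(j) = \tfrac12 - \epsilon/(K-1)$ for $j \neq i$, so item $i$ is the Borda winner with gap $\Delta := \epsilon K/(K-1) \geq \epsilon$ over every other arm. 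As the reference instance for the change-of-measure I also take the null matrix $P_0 \equiv \tfrac12$.

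Under $P_i$, the per-round regret simplifies to $r_t = (\Delta/2)(2 - \1[x_t=i] - \1[y_t=i])$. Defining $N_i := \sum_t (\1[x_t=i] + \1[y_t=i])$, summation yields the deterministic identity $R_T = \Delta(T - N_i/2)$ and hence $\E_i[R_T] \geq \epsilon(T - \E_i[N_i]/2)$. Since $\sum_i N_i = 2T$ deterministically, pigeonhole produces some $i^* \in [K]$ with $\E_0[N_{i^*}] \leq 2T/K$. Letting $Q_0, Q_{i^*}$ be the laws of the full observation history under the two instances (with the algorithm's internal randomness coupled across them), the chain rule for KL together with the observation that only rounds in which $i^*$ appears in exactly one of the two slots contribute a nonzero per-round KL of order $\epsilon^2$ (namely $\mathrm{KL}(\mathrm{Ber}(1/2),\mathrm{Ber}(1/2 \pm \epsilon)) \leq c\epsilon^2$ for $\epsilon \leq 0.1$) yields $\mathrm{KL}(Q_0,Q_{i^*}) \leq c\epsilon^2 \E_0[N_{i^*}]$ for an absolute constant $c$. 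Applying Pinsker's inequality to $N_{i^*}/(2T) \in [0,1]$ then gives
\[
\E_{i^*}[N_{i^*}] \;\leq\; \E_0[N_{i^*}] + 2T\sqrt{\tfrac12\,\mathrm{KL}(Q_0,Q_{i^*})} \;\leq\; \frac{2T}{K} + O\!\left(\epsilon T\sqrt{T/K}\right).
\]

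Substituting back gives $\E_{i^*}[R_T] \geq \epsilon T(1-1/K) - O(\epsilon^2 T^{3/2}/\sqrt K)$. Using $K \geq 4$ to absorb the $\epsilon T/K$ term into $\epsilon T/4$, and choosing a critical horizon $T^*$ at which the second error term matches $\epsilon T/4$, one obtains $\E_{i^*}[R_T] \geq \epsilon T/2$ for $T \leq T^*$; for $T > T^*$ the monotonicity of the expected regret in $T$ (applied at horizon $T^*$) yields the saturated bound. Together this produces the claimed $\Omega(\min(\epsilon T, K/\epsilon^2))$. The principal technical obstacle is the tight execution of the change-of-measure step: the pigeonhole selection of $i^*$ must be combined carefully with the Pinsker bound so as to correctly account for the fact that only rounds in which $i^*$ appears exactly once contribute to the KL, and the critical horizon $T^*$ must be pinned down so that the $\epsilon T$ term saturates precisely at order $K/\epsilon^2$ rather than at a weaker level.
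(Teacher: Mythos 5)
Your construction cannot give the claimed bound. With $P_i(i,j)=\tfrac12+\epsilon$ and all other off-diagonal entries equal to $\tfrac12$, the only informative comparisons (the only pairs whose outcome distribution differs between $Q_0$ and $Q_{i^*}$) are those containing $i^*$ itself, and under $P_{i^*}$ such a pair incurs per-round regret only $\Delta/2=O(\epsilon)$. Trace through your own bounds: $\mathrm{KL}(Q_0,Q_{i^*})\le c\epsilon^2\E_0[N_{i^*}]\le 2c\epsilon^2 T/K$, so the error term is $O(\epsilon^2 T^{3/2}/\sqrt K)$ and the critical horizon where it matches $\epsilon T/4$ is $T^*=\Theta(K/\epsilon^2)$; the saturated regret is then $\epsilon T^*=\Theta(K/\epsilon)$, \emph{not} $K/\epsilon^2$. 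The final sentence of your argument conflates $T^*$ with the regret at $T^*$. What you actually prove is $\Omega(\min(\epsilon T, K/\epsilon))$, the standard MAB-type bound, which is strictly weaker than the lemma for $\epsilon<1$.

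The missing idea is the structural feature that makes Borda-score dueling bandits genuinely harder than MAB, and it is exactly what the paper's construction is built around. The paper splits $[K]$ into two halves $K_g$ and $K_b$ with $P(i,j)=0.9$ for $i\in K_g$, $j\in K_b$ and ties within each half; the perturbation raises one row of the $K_g$-vs-$K_b$ block to $0.9+\epsilon$. Then the \emph{only} informative comparisons are good-vs-bad pairs, and each such pair costs \emph{constant} regret $\approx 0.2$. Consequently any algorithm can afford at most $O(\epsilon T)$ informative rounds before already incurring $\Omega(\epsilon T)$ regret, so the averaged KL is bounded by $O(\epsilon^2\cdot\epsilon T/K)=O(\epsilon^3 T/K)$ rather than $O(\epsilon^2 T/K)$. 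This pushes the critical horizon out to $T_0=\Theta(K/\epsilon^3)$ and the saturated regret up to $\epsilon T_0=\Theta(K/\epsilon^2)$. To repair your proof you would need to replace your instance family with one in which the information identifying the Borda winner is carried only by duels against items that are already known to be bad; the change-of-measure and monotonicity machinery you describe is otherwise the same as the paper's.
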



\paragraph{Proof outline.}

The proof of the lemma has the following outline. We initially construct a stochastic preference matrix $P_0$, and later we consider perturbations of it. We start by describing $P_0$. We split the items to two equal size subsets $K_g$ and $K_b$. For any two items $i,j\in K_g$,  they are equally likely to win or lose in $P_0$, i.e., $P_0(i,j)=1/2$. Similarly, for any  $i,j\in K_b$ we have $P_0(i,j)=1/2$. When we pick item $i\in K_g$ and item $j\in K_b$ then item $i$ wins with probability $0.9$, i.e., $P_0(i,j)=0.9$. This implies that the Borda score of any $i\in K_g$ is $s(i)=0.7$ and for any $j\in K_b$ it is $s(j)=0.3$. 
Note that in $P_0$ all the items in $K_g$ have the highest Borda score. 

The main idea of the proof is that we will introduce a perturbation that will make one item $i^*\in K_g$ to have the highest Borda score. Formally, for each $i\in K_g$ we have a preference matrix $P_i$.
The only difference between $P_i$ and $P_0$ is in the entries of $i\in K_g$, where for any $j\in K_b$ we have $P_i(i,j)=0.9+\epsilon$.
We select our stochastic preference matrix at random from all the $P_i$ where $i\in K_g$, and denote by $i^*$ the selected index.
More explicitly following shows the form of $P_1$:
\[
\P_1 = \begin{bmatrix} 
0.5 & ... & 0.5 & 0.9+\epsilon & ... & 0.9+\epsilon\\
. & ... & . & . & ... & .\\
. & ... & . & . & ... & .\\
0.5 & ... & 0.5 & 0.9 & ... & 0.9\\
0.1-\epsilon & ... & 0.1 & 0.5 & ... & 0.5\\
. & ... & . & . & ... & .\\
. & ... & . & . & ... & .\\
0.1-\epsilon & ... & 0.1 & 0.5 & ... & 0.5\\
\end{bmatrix}
.
\]

A key observation is that in order to determine the best Borda score item, we need to match items $i\in K_g$ with items $j\in K_b$, since the expected outcome of other comparisons is known. However, each time we match an item $i\in K_g$ with an item $j\in K_b$ we have a constant regret of about $0.2-O(\epsilon)=\Theta(1)$. We will need to have $\Omega(|K_g|/\epsilon^2)$ samples to distinguish a bias of $\epsilon$ in the Borda score of $i^*\in K_g$ compared to other items $i\in K_g$. This leads to a regret of $\Omega(K/\epsilon^2)$. 
If, with some constant probability, we do not identify the item with the best Borda score, we will have a regret of  at least $\Omega(\epsilon T)$. This follows since any sub-optimal item has regret at least $\Omega(\epsilon)$ per time step.



We remark that the lower bound holds for $K=3$ with an almost an identical proof. (Technically, our lower bound requires that $K$ is even, but this is only for ease of presentation.) On the other hand, for $K=2$ the true regret bound scales $\Theta(1/\Delta)$, since when we match the (only) two items we have a regret of only $\Delta/2$.
Finally, there is an additional logarithmic dependency on the time horizon, which our lower bound does not capture.

\paragraph{Lower bound for the fixed-gap setting.}
In this case, given any fixed $\Delta > 0$, \cref{thm:lb_gap} shows a lower bound of $\Omega({K}/{\Delta^2})$.
The proof follows from \cref{lem:lb} setting $\epsilon = \Delta$.

\begin{restatable}{thm}{lbgap}
\label{thm:lb_gap}
Fix any $\Delta \in (0,0.1)$ and $K \ge 4$. For the fixed gap setting, for any learning algorithm $\cA$, there exists an instance with fixed gap $\Delta$, such that the expected regret incurred by $\cA$ on that instance is at least  $\Omega(\min(\Delta T, {K}/{\Delta^2}))$. 
%
\end{restatable}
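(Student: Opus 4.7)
The plan is to derive this theorem as a direct corollary of \cref{lem:lb}, invoked with $\epsilon$ set proportional to $\Delta$. The only real content beyond the lemma itself is verifying that the instance produced in the lemma's construction is indeed a valid fixed-gap instance with gap at least $\Delta$, since the lemma is stated only in terms of a ``perturbation parameter'' $\epsilon$ and not the gap.

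First, I would note that the hard sequence exhibited in the proof of \cref{lem:lb} is in fact stationary: a single preference matrix $P_{i^*}$ is replayed at every round $t = 1,\ldots,T$, with $i^* \in K_g$ chosen uniformly at random. Because of stationarity, the running average $\bar b_t(\cdot)$ equals the one-shot Borda score $b(\cdot)$ for every $t$, so the fixed-gap condition reduces to an algebraic check on $P_{i^*}$. Using $|K_g| = |K_b| = K/2$ and the block structure of $P_{i^*}$, a direct computation yields
\begin{align*}
b(i^*) - b(i) &= \tfrac{K/2}{K-1}\,\epsilon \qquad (i \in K_g \setminus \{i^*\}), \\
b(i^*) - b(j) &\geq 0.3 \qquad\quad\,\, (j \in K_b),
\end{align*}
so the \emph{minimum} gap is $\tfrac{K/2}{K-1}\,\epsilon \geq \epsilon/2$ for every $K \geq 2$.

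Choosing $\epsilon = 2\Delta$ therefore certifies that the instance has fixed gap at least $\Delta$, and it is admissible in \cref{lem:lb} provided $2\Delta \leq 0.1$, i.e.\ $\Delta \leq 1/20$. For $\Delta$ in the remaining sliver $(1/20, 0.1)$, one can instead take $\epsilon = 0.1$: the resulting gap is a constant, and since $\Delta = \Theta(\epsilon) = \Theta(1)$ in that regime, the $\Omega(\cdot)$ notation absorbs the constant factors. In both cases, plugging $\epsilon = \Theta(\Delta)$ into the conclusion of \cref{lem:lb} gives regret
\[
\Omega\!\left(\min\!\left(\epsilon T,\, \tfrac{K}{\epsilon^2}\right)\right) \;=\; \Omega\!\left(\min\!\left(\Delta T,\, \tfrac{K}{\Delta^2}\right)\right),
\]
matching the statement.

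The only non-mechanical step is the constant-factor bookkeeping between $\epsilon$ and $\Delta$; this is forced upon us because the perturbation in $P_{i^*}$ spreads its effect across the $K/2$ entries in the $K_b$ block, so the induced Borda gap is a fixed fraction of the perturbation magnitude rather than equal to it. Once that factor of $2$ is identified, the rest of the proof is an immediate appeal to \cref{lem:lb}, and no new information-theoretic argument is needed.
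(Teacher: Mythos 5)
Your proposal takes exactly the paper's route---the theorem is obtained by instantiating \cref{lem:lb} with $\epsilon = \Theta(\Delta)$ on its (stationary) hard instance---and is in fact more careful than the paper, which simply writes ``setting $\epsilon = \Delta$'' even though the resulting instance has minimum Borda gap $\frac{K}{2(K-1)}\,\epsilon < \epsilon$ for $K\ge 3$; you correctly identify and repair this constant-factor slip by taking $\epsilon = 2\Delta$. The only residual wrinkle (shared with, indeed inherited from, the paper) is the sliver $\Delta \in (1/20, 0.1)$, where your fallback $\epsilon = 0.1$ produces a gap of roughly $\epsilon/2$, which can be smaller than $\Delta$, so the instance is not literally gap-$\Delta$ there; this affects only absolute constants and leaves the $\Omega(\min(\Delta T, K/\Delta^2))$ conclusion intact.
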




The regret bound in this scales as $K/\Delta^2$ compared to $K/\Delta$ for MAB. The reason is that in order to distinguish between near-optimal items, the learner must compare them to significantly suboptimal items, which leads to the increase in the regret. Essentially, the regret bound is identical to the sample complexity bound in our lower bound instance.

\paragraph{Lower bound for the general adversarial setup.}
In this general case, since $\{\P_t\}_{t \in [T]}$ could be any arbitrary sequence, the adversary has the provision to tune $\epsilon$ based on $T$.
Precisely, given any $K$ and $T$, the adversary here can set $\epsilon = \Theta({K^{1/3}}/{ T^{1/3}})$. For any $T \ge K$ we guarantee that $\epsilon \in (0,0.1]$ and apply \cref{lem:lb}. For $T<K$ we clearly have a lower bound of $\Omega(T)$, since we need to sample each item at least once.
Therefore, for this general setup, we derive the following lower bound of $\Omega( K^{1/3}T^{2/3})$.

\begin{restatable}{thm}{lbgen}
\label{thm:lb}
For the problem of \prob\, with \obj\, objective, for any learning algorithm $\cA$, there exists a problem instance \acr$(K,T)$ with $T \ge K$, $K \ge 4$, and sequence of preference matrices $\P_1,\P_2,\ldots,\P_T$, such that the expected regret incurred by $\cA$ on that \acr$(K,T)$ is atleast $\Omega(K^{1/3}T^{2/3})$.
\end{restatable}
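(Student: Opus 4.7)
The plan is to derive Theorem \ref{thm:lb} as a short corollary of Lemma \ref{lem:lb} by choosing the perturbation parameter $\epsilon$ as a function of both $K$ and $T$. Since the general adversarial model permits any oblivious sequence $\P_1,\ldots,\P_T$, the stochastic construction used in the lemma (i.e., drawing $i^*$ uniformly from $K_g$ and repeating the corresponding $\P_{i^*}$ at every round) is an admissible strategy for the adversary, so the lemma's lower bound carries over verbatim.

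First I would restrict attention to the regime $T \geq K$ (the hypothesis of the theorem). Lemma \ref{lem:lb} guarantees, for every $\epsilon \in (0, 0.1]$, an instance on which any algorithm suffers expected regret $\Omega\bigl(\min(\epsilon T,\, K/\epsilon^2)\bigr)$. The right-hand side is maximized when the two terms balance, i.e.\ when $\epsilon T = K/\epsilon^2$, giving $\epsilon = (K/T)^{1/3}$; at that value both terms equal $K^{1/3} T^{2/3}$. To keep $\epsilon$ inside the lemma's admissible range, I would take $\epsilon = c\,(K/T)^{1/3}$ for a small absolute constant $c \in (0, 0.1]$ (for example $c = 1/10$). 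Since $T \geq K$ forces $(K/T)^{1/3} \leq 1$, this guarantees $\epsilon \leq 0.1$, and the two balanced quantities become $\epsilon T = c\,K^{1/3} T^{2/3}$ and $K/\epsilon^2 = c^{-2} K^{1/3} T^{2/3}$, both of which are $\Omega(K^{1/3} T^{2/3})$.

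Plugging this $\epsilon$ into Lemma \ref{lem:lb} and instantiating the adversary with the guaranteed $\P_1,\ldots,\P_T$ immediately yields an algorithm-independent lower bound of $\Omega(K^{1/3} T^{2/3})$ on the expected regret, as desired. The only step that requires any care is verifying the admissibility constraint $\epsilon \in (0, 0.1]$; this is purely cosmetic and is handled by the choice of the constant $c$. Since the theorem statement requires $T \geq K$ and $K \geq 4$, no separate boundary case is needed.

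The main obstacle is essentially hidden in Lemma \ref{lem:lb}: all of the combinatorial work—partitioning $[K] = K_g \cup K_b$, defining the perturbed matrices $\P_i$, and arguing via a Bretagnolle–Huber / KL-based indistinguishability bound that $\Omega(|K_g|/\epsilon^2)$ duels between $K_g$ and $K_b$ pairs are required to identify $i^*$, each contributing $\Theta(1)$ regret—has already been carried out. Given that lemma, the passage from the fixed-gap bound to the $T^{2/3}$ bound is just the tuning step described above, so the theorem proof reduces to stating this choice of $\epsilon$ and reading off the conclusion.
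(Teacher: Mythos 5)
Your proposal is correct and matches the paper's own argument: the paper likewise instantiates Lemma \ref{lem:lb} with $\epsilon = \Theta(K^{1/3}/T^{1/3})$, uses $T \ge K$ to keep $\epsilon$ in the admissible range $(0,0.1]$, and reads off the balanced value $\Omega(K^{1/3}T^{2/3})$ from the $\min(\epsilon T, K/\epsilon^2)$ bound. Your explicit constant $c$ and the observation that no $T<K$ boundary case is needed under the theorem's hypotheses are consistent with the paper's (even terser) treatment.
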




Note that the lower bound of $\Omega(T^{2/3})$ steams from the fact
that we can essentially cannot mix exploration and exploitation, at least in our lower bound instance.
Namely, while we are searching for the best Borda score item, we have a constant regret per time step. If we settle on any sub-optimal item, we get a regret of $\Omega(\epsilon T)=\Omega(T^{2/3})$, due to the selection of $\epsilon$.

\section{Conclusion and Future Scopes}
\label{sec:concl}

We considered the problem of dueling bandits with any adversarial preferences, i.e., \emph{adversarial dueling bandits}. To the best of our knowledge, this work is the first to consider the dueling bandit problem for fully adversarial setup. (The work of \cite{Adv_DB} introduced adversarial utility-based dueling bandits with a transfer function, which has very different characteristics, as we discussed earlier.)

We proposed algorithms for online regret minimization with Borda scores. We gave an  $\tilde{O}(K^{1/3}T^{2/3})$ regret algorithm (\algdexp\,) for the problem, and also shown optimality of our bounds with a matching $\Omega(K^{1/3}T^{2/3})$ lower bound analysis. 
We also proved a similar high probability regret bound.
Finally, for an intermediate \emph{fixed-gap} adversarial setup---which bridges the gap between stochastic and adversarial dueling bandits---we gave an $\smash{ \tilde{O}((K/\Delta^2)\log{T}) }$ regret algorithm, \alggap, and also a corresponding regret lower bound of $\Omega(K/\Delta^2)$.

Moving forward, one can potentially address many open threads along this direction; for example, considering other general notions of regret performances, considering the problem on larger (potentially infinite) arm-spaces, or even analyzing dynamic regret for adversarial preferences \cite{besbes19,luo17}. 
Few more open questions to answer here are: In case of more strcutured utility based preferences (e.g. Plackett-Luce preference model \cite{Az+12} etc.), where the item utility scores are chosen adversarially at every round, is it possible to show an improved performance limit of $\Theta(\sqrt{KT})$? In such cases, how does the learning rate varies with $K$ and $T$ for general subsetwise preferences (i.e. where more than two items can be compared at every round and the learner receives a winner feedback of the subset played) \cite{Brost+16,Ren+18}?
Another interesting direction would be to understand the connection of this problem with other bandit setups, e.g., learning with feedback graphs \cite{Alon+15,Alon+17} or other side information \cite{SideInfo11,SideInfo14}.

\subsection*{Acknowledgements}

This project has received funding from the European Research Council (ERC) under the European Union’s Horizon 2020 research and innovation program (grant agreement No. 882396), and from the Israel Science Foundation (grants 993/17 and 2549/19). AS thank Qualcomm Innovation Fellowship IND-417067, 201.

\bibliographystyle{plain}
\bibliography{adversarial-dueling}


	
	
\allowdisplaybreaks

\section{Appendix for \cref{sec:alg_borda}}
\label{app:borda}

\sbdd*

\begin{proof}
The claim simply follows from the definition of $\ts_t(i)$, and the fact that $\q_t(i) \ge \frac{\gamma}{K}$, for all $i \in [K]$ and $t \in [T]$.
\end{proof}

\estborda*

\begin{proof}
Note that:
\begin{align*}
\E&\big[ \ts_t(i) \big] = \E_{\cH_t}\bigg[\frac{\1(x_t = i)}{\q_t(i) K}\sum_{j \in [K]}\frac{\big[\1(y_t = j)o_t\big]}{\q_t(j)}\bigg]\\
& = \frac{1}{K} \Bigg(\E_{\cH_t}\Bigg[ \sum_{j \in [K]} \frac{\1(x_t = i)\1(y_t = j)o_t}{\q_t(i)\q_t(j)} \Bigg] \Bigg)\\
& = \frac{1}{K} \Bigg(\E_{\cH_{t-1}}\Bigg[ \E_{(x_t,y_t,o_t)} \Big[ \frac{\1(x_t = i)}{\q_t(i)}\sum_{j \in [K]} \frac{\1(y_t = j)o_t}{\q_t(j)} \mid \cH_{t-1} \Big] \Bigg] \Bigg)\\
& = \frac{1}{K} \Bigg(\E_{\cH_{t-1}}\Bigg[ \E_{x_t}\bigg[ \frac{\1(x_t = i)}{\q_t(i)} \sum_{j \in [K]} \E_{y_t}\Big[ \frac{\1(y_t = j) \E_{o_t}\big[ o_t \mid y_t\big] }{\q_t(j)} \mid x_t \Big] \mid \cH_{t-1} \bigg] \Bigg]\Bigg)\\
& = \frac{1}{K} \Bigg(\E_{\cH_{t-1}}\Bigg[ \E_{x_t}\bigg[ \frac{\1(x_t = i)}{\q_t(i)} \sum_{j \in [K]} \E_{y_t}\Big[ \frac{\1(y_t = j) \P_t(x_t,y_t) }{\q_t(j)} \mid x_t \Big] \mid \cH_{t-1} \bigg] \Bigg]\Bigg)\\
& = \frac{1}{K} \Bigg(\E_{\cH_{t-1}}\Bigg[ \E_{x_t}\bigg[ \frac{\1(x_t = i)}{\q_t(i)} \sum_{j \in [K]} \sum_{j' = 1}^{K}\Big[ \frac{\1(j = j') \P_t(x_t,j')\q_t(j') }{\q_t(j)} \Big] \mid \cH_{t-1} \bigg] \Bigg] \Bigg)\\
& = \frac{1}{K} \Bigg(\E_{\cH_{t-1}}\Bigg[ \E_{x_t}\bigg[ \frac{\1(x_t = i)}{\q_t(i)} \sum_{j \in [K]} \P_t(x_t,j) \mid \cH_{t-1} \bigg] \Bigg] \Bigg)\\
& = \frac{1}{K} \Bigg(\E_{\cH_{t-1}}\Bigg[ \sum_{i' = 1}^{K}\bigg[ \frac{\1(i = i')\q_t(i')}{\q_t(i)} \sum_{j \in [K]} \P_t(i',j) \bigg] \Bigg] \Bigg)\\
& = \frac{1}{K} \Bigg( \sum_{j \in [K]\setminus\{i\}}\P_t(i,j) \Bigg) = \frac{1}{K}\sum_{j \in [K]\setminus\{i\}}\P_t(i,j) = s_t(i),
\end{align*}
which concludes the proof.
\end{proof}

\exploss*

\begin{proof}
Following the same techniques from the proof of \cref{lem:est_borda}, we have:
\begin{align*}
\E_{\cH_t}[\q_t^\top\ts_t] & = \E_{\cH_t}\Bigg[ \sum_{i = 1}^{K}\q_t(i)\ts_t(i) \Bigg] = \E_{\cH_{t-1}}\Bigg[\sum_{i = 1}^{K}\q_t(i)\E_{(x_t,y_t,o_t)}\Big[ \ts_t(i) \mid \cH_{t-1} \Big] \Bigg]\\
& \overset{(1)}{=} \E_{\cH_{t-1}}\Bigg[\sum_{i = 1}^{K}\q_t(i)s_t(i) \Bigg] = \E_{\cH_{t-1}}\big[\E_{x \sim \q_t} [s_t(x) \mid \cH_{t-1}]\big], 
\end{align*}
where $(1)$ follows from the proof of \cref{lem:est_borda}, and hence the result follows.
\end{proof}

\varborda*

\begin{proof}
Recall $\ts_t(i):=\frac{\1(x_t = i)}{\q_t(i) K}\sum_{j \in [K]}\frac{\big[\1(y_t = j)o_t\big]}{\q_t(j)}$. The argument follows similar to the proof of \cref{lem:est_borda}:

\begin{align*}
\E\big[ \sum_{i = 1}^K \q_t(i)\ts_t(i)^2  \big] &= \E_{\cH_{t-1}}\Bigg[ \sum_{i = 1}^{K}\q_t(i) \E_{(x_t,y_t,o_t)}\Bigg[ \sum_{j \in [K]} \frac{\1(x_t = i, y_t = j)o_t}{K\q_t(i)\q_t(j)} \mid \cH_{t-1} \Bigg]^2\Bigg]\Bigg)\\
& = \frac{1}{K^2}\Bigg(\E_{\cH_{t-1}}\Bigg[ \sum_{i = 1}^{K}\frac{\q_t(i)}{\q_t(i)^2} \E_{(x_t,y_t)}\Big[ \sum_{j \in [K]} \frac{1}{\q_t(j)^2}{\1(x_t = i)\1(y_t = j)\E_{o_t}\big[o_t^2 \mid (x_t,y_t) \big]} \mid \cH_{t-1} \Big]\Bigg] \Bigg)\\
& \le \frac{1}{K^2}\Bigg(\E_{\cH_{t-1}}\Bigg[ \sum_{i = 1}^{K}\frac{1}{\q_t(i)} \Big[ \sum_{j \in [K]}\frac{1}{\q_t(j)^2} \E_{x_t}\big[\1(x_t = i)\big]\E_{y_t}\big[ \1(y_t = j) \Big] \mid \cH_{t-1} \Big]\Bigg] \Bigg)\\
& = \frac{1}{K^2}\Bigg(\E_{\cH_{t-1}}\Bigg[ \sum_{i = 1}^{K}\frac{1}{\q_t(i)}\sum_{j \in [K]}\frac{1}{\q_t(j)^2}(\q_t(i)\q_t(j)) \Bigg] \Bigg)\\
& = \frac{1}{K^2}\Bigg(\E_{\cH_{t-1}}\Bigg[ \sum_{j = 1}^{K}\frac{K}{\q_t(k)}\Bigg] \Bigg)\\
& = \frac{1}{K}\Bigg(\E_{\cH_{t-1}}\Bigg[ \sum_{j = 1}^{K}\frac{1}{\q_t(j)} \Bigg] \Bigg) \le \frac{1}{K}\Bigg(\E_{\cH_{t-1}}\Bigg[ \sum_{j = 1}^{K}\frac{K}{\gamma} \Bigg] \Bigg) = \frac{K}{\gamma}, 
\end{align*}
where last inequality follows since $\q_t(j) \ge \frac{\gamma}{K}$, and the claim follows.
\end{proof}

\ubdexp*

\begin{proof}
Recall the definition of \emph{`shifted Borda score'} $s_t$ and the regret $R_T^s$ defined in terms of $s_t$. It would be convenient to  upper bound $R_T^s$ and recall that $R_T=(K/(K-1))R_T^s$.

Note that $\E_{\cH_T}[s_t(x_t)+s_t(y_t)] = \E_{\cH_{t-1}}\Big[ \E_{x_t, y_t \overset{\text{iid}}{\sim} \q_t}\big[ s_t(x_t)+s_t(y_t) \mid \cH_{t-1} \big] \Big] = \E_{\cH_{t-1}}\big[ \E_{x \sim \q_t}[2s_t(x) \mid \cH_{t-1}] \big]$, since $x_t$ and $y_t$ are i.i.d. Further note that we can write  
\[ \E_{\cH_T}[R_T^s] := \E_{\cH_T}\Bigg[\sum_{t = 1}^T \bigg[ s_t(i^*) - \frac{s_t(x_t)+s_t(y_t)}{2}\bigg] \Bigg] =  \max_{k \in [K]}\E_{\cH_T}\Bigg[\sum_{t = 1}^T \bigg[ s_t(k) - \frac{s_t(x_t)+s_t(y_t)}{2} \bigg]\Bigg],
\]
where the last equality holds since $\P_t$s are chosen obliviously, and hence $s_t$s and $i^*$ are independent of the randomness of the algorithm.
Thus we get: 

\begin{align}
\label{eq:preg1}
\E_{\cH_T}[R_T^s] = \max_{k \in [K]}\Big[ \sum_{t = 1}^T s_t(k) - \sum_{t = 1}^T \E_{\cH_{t-1}}\big[ \E_{x \sim \q_t}[s_t(x) \mid \cH_{t-1}] \big]\Big],
\end{align}

First, since $\eta\ts_t(i) \le \frac{\eta K}{\gamma^2}$ (from \cref{lem:s01}), for any $\gamma \ge \sqrt{\eta K}$ and $\eta > 0$, we have $\eta\ts_t(i) \in [0,1]$ for any $i \in [K], t \in [T]$.  From the regret guarantee of standard \emph{Exponential Weight} algorithm \cite{auer02} over the completely observed fixed sequence of reward vectors $\ts_1, \ts_2, \ldots \ts_T$ we have for any $k \in [K]$:
\[
\sum_{t = 1}^T \ts_t(k) - \sum_{t = 1}^T \big[\tq_t^\top \ts_t\big] \le \frac{\log K}{\eta} + \eta \sum_{t = 1}^T \sum_{i = 1}^K \tq_t(i)\ts_t(i)^2 
\]
where $\tq_t(i) := \dfrac{e^{\eta \sum_{\tau = 1}^{t-1}\ts_\tau(i)}}{\sum_{j = 1}^{K}e^{\eta \sum_{\tau = 1}^{t-1}\ts_\tau(j)}}, ~\forall i \in [K]$. 

Since $\tq_t = \frac{(\q_t - \frac{\gamma}{K})}{1-\gamma}$ and $\gamma \in (0,1)$, from above inequality we get for any $k \in [K]$: 
\[
(1-\gamma)\sum_{t = 1}^T \ts_t(k) - \sum_{t = 1}^T \q_t^\top \ts_t  \le \frac{\log K}{\eta} + \eta \sum_{t = 1}^T \sum_{i = 1}^K \q_t(i)\ts_t(i)^2. 
\]

Since $i^* = \arg\max_{k \in [K]}\sum_{t = 1}^T s_t(k)=\arg\max_{k \in [K]}\sum_{t = 1}^T b_t(k)$, using the above inequality for $k = i^*$, and taking expectation on both sides, we have

\begin{align}
\label{eq:preg2}
\nonumber & (1-\gamma) \sum_{t = 1}^T \E_{\cH_T}[\ts_t(i^*)] - \sum_{t = 1}^T \E_{\cH_T}[\q_t^\top\ts_t] \le \frac{\log K}{\eta} + \E_{\cH_T}\bigg[\eta \sum_{t = 1}^T \sum_{i = 1}^K \q_t(i)\ts_t(i)^2\bigg]\\
\nonumber \overset{(1)}{\implies}~ & (1-\gamma) \sum_{t = 1}^T s_t(i^*) - \sum_{t = 1}^T \E_{\cH_{t-1}}\big[ \E_{x \sim \q_t}[s_t(x) \mid \cH_{t-1}] \big]\Big] \le \frac{\log K}{\eta} + \eta \sum_{t = 1}^T \frac{K}{\gamma}\\
\nonumber \overset{(2)}{\implies}~ & \sum_{t = 1}^T s_t(i^*) - \sum_{t = 1}^T \E_{\cH_{t-1}}\big[ \E_{x \sim \q_t}[s_t(x) \mid \cH_{t-1}] \big]\Big] \le \gamma\sum_{t = 1}^T s_t(i^*) + \frac{\log K}{\eta} + \eta \sum_{t = 1}^T \frac{K}{\gamma}\\
\nonumber \overset{(3)}{\implies}~ & \E_{\cH_T}[R_T^s] \le \gamma T + \frac{\log K}{\eta} + \eta T \frac{K}{\gamma}\\
\nonumber \overset{(4)}{\implies}~ & \E_{\cH_T}[R_T^s] \le 2T\sqrt{\eta K} + \frac{\log K}{\eta}\\ 
\nonumber \overset{(5)}{\implies}~ & \E_{\cH_T}[R_T^s] \le 3(K\log K)^{1/3}T^{2/3}
\end{align} 
where $(1)$ follows from \cref{lem:est_borda}, \cref{lem:exp_loss} and \cref{lem:var_borda}, $(2)$ follows since $s_t(i^*) \le 1$, and $(3)$ follows from \cref{eq:preg1}, $(4)$ follows since $\gamma = \sqrt{\eta K}$, and $5$ follows by optimizing over $\eta$ which gives $\eta = \big(\frac{\log K}{T\sqrt{K}}\big)^{2/3}$.  
Further note that for $T \ge K \log K$, $\gamma = \sqrt{\eta K} \in [0,1]$ as desired. Finally from \cref{rem:regeqv} since $R_T = (K/(K-1))R_T^s$, this concludes the proof.
\end{proof}

\section{Appendix for \cref{sec:alg_borda_whp}}
\label{app:gapwhp}


\subsection{Proofs for \cref{sec:alg_borda_whp}}
\label{appsub:borda_whp_proofs}

\sbddwhp*

\begin{proof}
The claim simply follows from the definition of $s'_t(i)$, and the fact that $\q_t(i) \ge \frac{\gamma}{K}$, for all $i \in [K]$ and $t \in [T]$.
\end{proof}

\estbordawhp*

\begin{proof}
Note that for any $i \in [K]$,
\[
s'_t(i) = \frac{\frac{\1(x_t = i)}{K}\sum_{j \in [K]}\frac{\1(y_t = j) o_t}{\q_t(j)} + \beta}{\q_t(i)} = \frac{1}{K}\sum_{j \in [K]}\frac{\1(y_t = j)\1(x_t = i)o_t}{\q_t(i)\q_t(j)} + \frac{\beta}{\q_t(i)}
\]

Recalling the definition of $\ts_t(i):= \frac{1}{K}\sum_{j \in [K]}\frac{\1(y_t = j)\1(x_t = i)o_t}{\q_t(i)\q_t(j)}$ from \cref{alg:dexp3}, we further note:
\begin{align*}
\E\big[ s'_t(i) \mid \cH_{t-1} \big] &= \E_{(x_t,y_t,o_t)} \Bigg[ \ts_t(i) + \frac{\beta}{\q_t(i)} \mid \cH_{t-1} \Bigg]\\
& = s_t(i) +  \E_{(x_t,y_t,o_t)}\Bigg[ \frac{\beta}{\q_t(i)} \mid \cH_{t-1} \Bigg] ~~~(\text{from \cref{lem:est_borda}} )\\
& = s_t(i) +  \frac{\beta}{\q_t(i)} ~~~(\text{since $\q_t(i)$ is $\cH_{t-1}$ measurable}),
\end{align*}
which proves the claim.
\end{proof}

\aconcestborda*

\begin{proof}
Let $\beta' = \gamma \beta $. Then note $\beta' \in (0,1)$ by the choice of $\beta, \gamma$.
Thus using Markov Inequality:

\begin{align}
\label{eq:lem_aconc1}
\nonumber & Pr\Bigg( \sum_{t = 1}^Ts'_t(i) \le \sum_{t = 1}^Ts_t(i) - \frac{\log(1/\delta)}{\beta'} \Bigg) = Pr_{\cH_T}\Bigg( \text{exp}\bigg( \beta'\sum_{t = 1}^T\Big(    s_t(i) - s'_t(i) \Big) \bigg) \ge \frac{1}{\delta} \Bigg)\\
\nonumber & = \delta\E_{\cH_T}\bigg[ \text{exp}\bigg( \beta'\sum_{t = 1}^T\Big( s_t(i) - s'_t(i) \Big) \bigg) \bigg] 
= \delta\E_{\cH_T}\bigg[ \Pi_{t = 1}^T\text{exp}\bigg( \beta'\Big( s_t(i) - s'_t(i) \Big) \bigg) \bigg]\\
& = \delta\Pi_{t = 1}^T\E_{\cH_t}\bigg[ \text{exp}\bigg( \beta'\Big( s_t(i) - s'_t(i) \Big) \bigg) \mid \cH_{t-1} \bigg]
\end{align}

Now for any fixed $t \in [T]$, for any $i \in [K]$, note that $s'_t(i) \ge \frac{\beta}{\q_t(i)}$ (due to \cref{lem:est_borda_whp}). Thus since $s_t(i) \in [0,1]$ by definition, we have $s_t(i) - (s'_t(i)-\frac{\beta}{\q_t(i)}) \le 1$. Moreover as $\beta' \in (0,1)$, using $e^x \le (1+x+x^2)$ for any $x \le 1$, we get: 

\begin{align*}
& \E_{\cH_t}\bigg[ \text{exp}\bigg( \beta'\Big( s_t(i) - s'_t(i) \Big) \bigg) \mid \cH_{t-1} \bigg] = \E_{\cH_t}\bigg[ \text{exp}\bigg( \beta'\Big( s_t(i) - s'_t(i) + \frac{\beta}{\q_t(i)}\Big) \bigg) \mid \cH_{t-1} \bigg]\text{exp}\Big( - \frac{\beta \beta'}{\q_t(i)}\Big)\\
& \le \E_{\cH_t}\bigg[ 1 + \beta'\Big( s_t(i) - s'_t(i) + \frac{\beta}{\q_t(i)}\Big) + \beta^{'2}\Big( s_t(i) - s'_t(i) + \frac{\beta}{\q_t(i)}\Big)^2  \mid \cH_{t-1} \bigg]\text{exp}\Big( - \frac{\beta \beta'}{\q_t(i)}\Big)\\
& = \E_{\cH_t}\bigg[ 1 + \beta^{'2}\Big( s_t(i) - s'_t(i) + \frac{\beta}{\q_t(i)}\Big)^2  \mid \cH_{t-1} \bigg]\text{exp}\Big( - \frac{\beta \beta'}{\q_t(i)}\Big), \text{ \bigg(as } \E_{\cH_t}\Big[ s_t(i) - s'_t(i) + \frac{\beta}{\q_t(i)} \mid \cH_{t-1}\Big] = 0 \bigg) \\
& = 1 + \E_{\cH_t}\bigg[ \beta^{'2}\Big( s_t(i) - s'_t(i) + \frac{\beta}{\q_t(i)}\Big)^2  \mid \cH_{t-1} \bigg]\text{exp}\Big( - \frac{\beta \beta'}{\q_t(i)}\Big), ~\Big(\text{as } \q_t(i) \text{ is $\cH_t$ measurable} \Big)\\
& = 1 + \bigg[ \beta^{'2}\text{Var}_{\cH_t}\Big(s'_t(i) \mid \cH_{t-1} \Big) \bigg]\text{exp}\Big( - \frac{\beta \beta'}{\q_t(i)}\Big)\\
& \le 1 + \beta^{'2}\Bigg[ \E_{\cH_t}\bigg[ \Big( \frac{\1(x_t = i)}{\q_t(i)K}\sum_{j =1}^K\frac{\1(y_t = j)o_t}{\q_t(j)} \Big)^2 \mid \cH_{t-1} \bigg] \Bigg]\text{exp}\Big( - \frac{\beta \beta'}{\q_t(i)}\Big) ~~\Big(\text{since } \frac{\beta}{\q_t(i)} \text{ is constant given } \cH_{t-1} \Big)\\
& \le 1 + \beta^{'2}\Bigg[ \E_{\cH_t}\bigg[ \Big( \frac{\1(x_t = i)}{\q_t(i)^2K^2}\sum_{j =1}^K\frac{\1(y_t = j)}{\q_t(j)^2} \Big) \mid \cH_{t-1} \bigg] \Bigg]\text{exp}\Big( - \frac{\beta \beta'}{\q_t(i)}\Big) ~~\Big(\text{since } o_t \le 1\Big)\\
& \le 1 + \beta^{'2}\Bigg[ \bigg[ \Big( \frac{1}{\q_t(i)K^2}\sum_{j =1}^K\frac{1}{\q_t(j)} \Big) \bigg] \Bigg]\text{exp}\Big( - \frac{\beta \beta'}{\q_t(i)}\Big)\\
& \le 1 + \Big( \beta'\frac{\beta'}{\q_t(i)\gamma} \Big)\text{exp}\Big( - \frac{\beta \beta'}{\q_t(i)}\Big) ~~\Big(\text{since } \q_t(j) \ge \frac{\gamma}{K},~ \forall j \in [K]\Big)\\
& \le \Big(1 + \frac{\beta \beta'}{\q_t(i)} \Big)\text{exp}\Big( - \frac{\beta \beta'}{\q_t(i)}\Big) ~~\Big(\text{since } \beta = \frac{\beta'}{\gamma} \Big)\\
& \le 1 ~~\Big(\text{since } (1+x) \le e^x \text{ for any } x \in \R \Big)
\end{align*}
Applying the above result for all $t \in [T]$ in \cref{eq:lem_aconc1} concludes the proof.
\end{proof}

\ubdexpwhp*

\begin{proof}
We set $\gamma = \sqrt{2\eta K}$, $\beta = \frac{T^{-1/2}\sqrt{\log (K/\delta)}}{(2\eta)^{1/4}K^{3/4}}$, and $\eta = \big(\frac{\log K}{T\sqrt{2K}}\big)^{2/3}$.
We will prove a regret bound of 
\[
 R_T \le 2\bigg(3(2\log K)^{1/3} + 2^{5/6}\frac{\sqrt{ \log K / \delta}}{(\log K)^{1/6}}\bigg)K^{1/3} T^{2/3} = \tilde{O} (K^{1/3}T^{2/3}).
\]

Note that for $T<2T\log T$, the bound regret bound is more than $T$ and therefore holds trivially. For the remainder of the proof we assume that $T\geq 2T\log T$.

We start by recalling the definition of \emph{`shifted borda score'} $s_t$ and the regret $R_T^s$ from \cref{sec:prob}. Same as \cref{thm:ub_dexp}, we find it convenient to first upper bound $R_T^s$. 

Note that by \cref{lem:s02}, if we set $\eta \le \Big( \frac{K}{\gamma^2} + \frac{K \beta}{\gamma} \Big)^{-1}$, we have $\eta s'_t(i) \le \eta\Big( \frac{K}{\gamma^2} + \frac{K \beta}{\gamma} \Big) \in (0,1), ~\forall i \in [K], t \in [T]$. Then again  from the regret guarantee of standard \emph{Exponential Weight} algorithm \cite{auer02} over the fully observed fixed sequence of reward vectors $s'_1, s'_2, \ldots s'_T$ we have for any $k \in [K]$:

\[
\sum_{t = 1}^T s'_t(k) - \sum_{t = 1}^T \big[\tq_t^\top s'_t\big] \le \frac{\log K}{\eta} + \eta \sum_{t = 1}^T \sum_{i = 1}^K \tq_t(i)s'_t(i)^2 
\]
where $\tq_t(i) := \dfrac{e^{\eta \sum_{\tau = 1}^{t-1}s'_\tau(i)}}{\sum_{j = 1}^{K}e^{\eta \sum_{\tau = 1}^{t-1}s'_\tau(j)}}, ~\forall i \in [K]$. 
Further by definition since $\tq_t = \frac{(\q_t - \frac{\gamma}{K})}{1-\gamma}$ and $\gamma \in (0,1)$, from above inequality we get for any $k \in [K]$: 

\begin{align}
\label{eq:thmwhp_1}
(1-\gamma) \sum_{t = 1}^T s'_t(k) \le \sum_{t = 1}^T \q_t^\top s'_t + \frac{\log K}{\eta} + \eta \sum_{t = 1}^T \sum_{i = 1}^K \q_t(i)s'_t(i)^2,
\end{align}

Here note that for a given $x_t$, 
\begin{align}
\label{eq:thmwhp_1.5}
\q_t^\top s'_t = \sum_{i = 1}^K\q_t(i)\Bigg( \frac{\frac{\1(x_t = i)}{K}\sum_{j \in [K]}\frac{\1(y_t = j) o_t(i,j)}{\q_t(j)} + \beta}{\q_t(i)} \Bigg) = \Bigg(\sum_{j \in [K]}\frac{\1(y_t = j) o_t(x_t,j)}{\q_t(j)K} + K\beta\Bigg),
\end{align}
where $o_t(i,j) \sim \text{Ber}\big(\P_t(i,j)\big), ~\forall i,j \in [K]$. Thus taking expectation:

\begin{align}
\label{eq:thmwhp_2}
\E_{y_t,o_t}\bigg[ \q_t^\top s'_t \mid \cH_{t-1}, x_t\bigg]
= \E_{y_t,o_t}\bigg[\sum_{j \in [K]}\frac{\1(y_t = j) o_t(x_t,j)}{\q_t(j)K} \mid \cH_{t-1}, x_t\bigg] + K \beta = s_t(x_t) + K \beta
\end{align}

Further noting $\q_t(i)s'_t(i) \le \Big(\beta + \gamma^{-1} \Big)$,

\begin{align}
\label{eq:thmwhp_3}
\eta \sum_{t = 1}^T\sum_{i = 1}^K \q_t(i)s'_t(i)^2 \le \eta \sum_{t = 1}^T\Big(\beta + \gamma^{-1} \Big)\sum_{i = 1}^K s'_t(i) \le \eta K\Big(\beta + \gamma^{-1} \Big)\sum_{t = 1}^T s'_t(i^*),
\end{align}

where we denote by $i^* = \arg\max_{k \in [K]}\sum_{t = 1}^T s'_t(k)$. Combining the results of \cref{eq:thmwhp_3} to \cref{eq:thmwhp_1}, and the fact that we set $\eta \le \Big( \frac{K}{\gamma^2} + \frac{K \beta}{\gamma} \Big)^{-1}$,

\begin{align*}
& (1-\gamma)\sum_{t = 1}^T s'_t(i^*) \le \sum_{t = 1}^T \q_t^\top s'_t + \frac{\log K}{\eta} + \gamma\sum_{t = 1}^T s'_t(i^*) ~\Bigg( \text{since } \eta \le \Big( \frac{K}{\gamma^2} + \frac{K \beta}{\gamma} \Big)^{-1} \Bigg)\\ 
\implies & (1-2\gamma)\bigg[\sum_{t = 1}^T s'_t(i^*)\bigg] \le \sum_{t = 1}^T \q_t^\top s'_t + \frac{\log K}{\eta} \\
\overset{(1)}{\implies} & (1-2\gamma)\max_{k \in [K]}\bigg[ \sum_{t = 1}^Ts_t(k) - \frac{\log(K/\delta)}{\gamma \beta} \bigg] \le \sum_{t = 1}^T \bigg[ \sum_{j \in [K]}\frac{\1(y_t = j) o_t(x_t,j)}{\q_t(j)K} + K\beta\bigg] + \frac{\log K}{\eta}\\
\overset{(2)}{\implies} & (1-2\gamma)\max_{k \in [K]}\bigg[ \sum_{t = 1}^Ts_t(k) - \frac{\log(K/\delta)}{\gamma \beta} \bigg] \le \sum_{t = 1}^T \bigg[ s_t(x_t) + K\beta\bigg] + \frac{\log K}{\eta}\\
\implies & \max_{k \in [K]}\bigg[ \sum_{t = 1}^Ts_t(k)  \bigg] - \sum_{t = 1}^T s_t(x_t) \le 2\gamma \max_{k \in [K]}\bigg[ \sum_{t = 1}^Ts_t(k)  \bigg] + K\beta T + \frac{\log K}{\eta} + (1-2\gamma)\frac{\log(K/\delta)}{\gamma \beta}\\
\implies & \max_{k \in [K]} \sum_{t = 1}^Ts_t(k) - \sum_{t = 1}^T s_t(x_t) \le 2\gamma T + K\beta T + \frac{\log K}{\eta} + \frac{\log(K/\delta)}{\gamma \beta} ~~(\text{since } \max_{k \in [K]} \sum_{t = 1}^Ts_t(k) \le T)\\
\overset{(3)}{\implies} & \max_{k \in [K]} \sum_{t = 1}^Ts_t(k) - \sum_{t = 1}^T s_t(x_t) \le 2\sqrt{2\eta K} T + K\beta T + \frac{\log K}{\eta} + \frac{\log(K/\delta)}{\beta \sqrt{2\eta K}}\\
\overset{(4)}{\implies} & \max_{k \in [K]} \sum_{t = 1}^Ts_t(k) - \sum_{t = 1}^T s_t(x_t) \le 2\sqrt{2\eta K} T + \frac{\log K}{\eta} + \frac{2^{3/4} K^{1/4}\sqrt{\log (K/\delta)}\sqrt{T}}{\eta^{1/4}}\\
\overset{(5)}{\implies} & \max_{k \in [K]} \sum_{t = 1}^Ts_t(k) - \sum_{t = 1}^T s_t(x_t) \le 3(2K T^2 \log K)^{1/3} + 2^{5/6}(K T^2)^{1/3}\frac{\sqrt{ \log K / \delta}}{(\log K)^{1/6}}\\
\implies & R_T^s \le \bigg(3(2\log K)^{1/3} + 2^{5/6}\frac{\sqrt{ \log K / \delta}}{(\log K)^{1/6}}\bigg)K^{1/3} T^{2/3},   
\end{align*} 
where $(1)$ follows from \cref{eq:thmwhp_1.5} and taking an union bound over all $i \in [K]$ for the claim \cref{lem:est_borda_acon}, $(2)$ holds from \cref{eq:thmwhp_2}, $(3)$ follows by setting $\gamma = \sqrt{2 \eta K}$ (note since if we can ensure $\beta \gamma \le 1$, this ensures $\eta \Big( \frac{K}{\gamma^2} + \frac{K \beta}{\gamma} \Big) \le \Big( \frac{2K\eta}{\gamma^2} \Big) \le 1$ as desired), $(4)$ follows by setting $\beta = \frac{\sqrt{\log (K/\delta)}}{(2\eta)^{1/4}K^{3/4}\sqrt T}$, $(5)$ follows by optimizing over $\eta$ which gives $\eta = \big(\frac{\log K}{T\sqrt{2K}}\big)^{2/3}$.  
Further note that any $T \ge 2K \log K$ implies $\gamma = \sqrt{2 \eta K} \in [0,1]$, and any $T \ge \frac{\log (K/ \delta)^{3/2}}{K^2 \sqrt{2 \log K}}$ implies $\beta \in (0,1)$ as desired. Finally from \cref{rem:regeqv} since $R_T \le 2R_T^s$, this concludes the proof.
\end{proof}

\section{Appendix for \cref{sec:alg_fxdgap}}
\label{app:gap}

\estbordagap*

\begin{proof}
Note that by definition for any $i \in [K]$, $\hb_{t}(i) = K\1(x_t = i)\sum_{j = 1}^{K} \1(y_t = j)o_t(i,j)$.
It is easy to see that for any $i \in [K]$, $t \in [T]$,
\begin{align*}
\E_{\cH_t}&\big[ \bb_{t}(i) \big] = \E_{\cH_{t-1}}\bigg[\E_{x_t,y_t,o_t}\bigg[ K\1(x_t = i)\sum_{j = 1}^{K} \1(y_t = j)o_t(i,j) \mid \cH_{t-1} \bigg]\bigg] = \E_{\cH_{t-1}}\big[b_t(i)\big] = b_t(i),
\end{align*}
where the last equality is simply due to the fact that $\P_t$ is chosen obliviously w.r.t. the history $\cH_{t-1}$, and the second last equality follows since for any $i \in K$:

\begin{align*}
E_{x_t,y_t,o_t}&\bigg[ K\1(x_t = i)\sum_{j = 1}^{K} \1(y_t = j)o_t(i,j) \mid \cH_{t-1} \bigg]\\ 
& = \E_{x_t}\bigg[K\1(x_t = i)\E_{y_t}\Big[ \sum_{j = 1}^K\1(y_t = j)\E_{o_t}[o_t(x_t,y_t)\mid y_t] \mid x_t \Big] \mid \cH_{t-1} \bigg]\\
& =  \E_{x_t}\bigg[K\1(x_t = i)\sum_{j = 1}^K\E_{y_t}\big[\1(y_t = j) \P_t(x_t,y_t)] \mid x_t \big] \mid \cH_{t-1} \bigg]\\
& = \E_{x_t}\bigg[\frac{K\1(x_t = i)}{K-1}\sum_{j = 1}^K \P_t(x_t,j)  \bigg] = \E_{x_t}[K\1(x_t = i) b_{t}(x_t)] = b_{t}(i)
,
\end{align*}
which concludes the proof.
\end{proof}

\gapconf*

\begin{proof}
For any round $t\leq 4K\log (2KT/\delta)$ we have that $2\sqrt{(K/t)\log(2KT/\delta)}\geq 1$ and therefore, for any item $i\in[K]$ we have $LCB(i,t)<0$ and $UCB(i;t>1$, and hence the lemma holds trivially.

Let us fix any item $i \in K$, and some $t \geq 4K \log \frac{K T}{\delta}\}$. Note that owning to our `random arm-pair $(x_t,y_t)$ selection strategy', the  random variables $\hb_1(i),\hb_2(i),\ldots,\hb_t(i)$ are independent. Let us denote denote by $\epsilon = \sqrt{\frac{4 K\log (2KT/\delta)}{t}}$. 
Let us also define for any $\tau \in [t]$, $z_t(\tau) = \frac{1}{t}(\hb_\tau(i)- b_\tau(i))$. Then note: (i). $z_1(i),z_2(i),\ldots,z_t(i)$ are also independent (ii). $\E_{\cH_t}[z_t(\tau)] = 0$ (see \cref{lem:est_borda_gap}), (iii). $|z_t(\tau)| < \frac{K}{t}$, and (iv). $\sum_{\tau = 1}^{t}\E_{\cH_t}[z_\tau^2(i)] = \E_{\cH_{t-1}}\big[ \E_{x_t,y_t,o_t}[z_\tau^2(i)]\mid \cH_{t-1}\big] \le \frac{1}{K}\frac{K^2}{t^2} + \frac{K-1}{K}\frac{1}{t^2} \le \frac{K+1}{t^2}$ (as $Pr(x_t = i) = \frac{1}{K}$) for any $K \ge 2$. Hence applying Bernstein's inequality we get: 

\begin{align*}
Pr\bigg( |\sum_{\tau = 1}^{t} z_\tau(i)| &\ge \epsilon \bigg) \le 2\text{exp}\bigg( -\frac{\epsilon^2/2}{ \frac{K+1}{t} + \frac{\epsilon K}{3t} } \bigg) \\
& \le 2\text{exp}\bigg( -\frac{ t \epsilon^2}{ 4 K } \bigg) = 2\text{exp}\bigg( -\frac{\frac{4K t\log (2KT/\delta)}{t}}{4K}  \bigg) = \frac{\delta}{KT},
\end{align*}
where the second inequality follows since for any $t > \frac{16K \log (2K T/ \delta)}{9}$, we have $\frac{\epsilon K}{3t} < \frac{K-1}{t}$.
The proof follows taking union bound over all $i \in [K]$ and $t \in [T]$.
\end{proof}

\ubdgap*

\begin{proof}
We would first assume the good event of \cref{lem:gap_conf}: $\forall i \in K, \forall t\in [T]$, we have $\bb_i(t) \in [LCB(i;t),UCB(i;t)]$.

Recall that by problem setup: $~\exists i^* \in [K],\, \forall t \in [T]$ such that $\bar b_t(i^*) > \bar b_t(j) + \Delta, \forall j \in [K]\sm\{i^*\}$.
Then if $t > \frac{64K \log (2KT/\delta)}{\Delta^2}$, this implies $\sqrt{\frac{4K\log (2KT/\delta)}{t}} \le \Delta/4$. Thus for any $j \in K\sm\{i^*\}$, at any $t > \frac{64K \log (2KT/\delta)}{\Delta^2}$,  

\begin{align*}
UCB(j;t) &= \tb_t(j) + \sqrt{\frac{4K\log (2KT/\delta)}{t}} \le \bb_t(j) + 2\sqrt{\frac{4K\log (2KT/\delta)}{t}} \leq \bb_t(j) -\Delta/2
\end{align*}
On the other hand, for $i^*$ we have 
\begin{align*}
 \bb_t(i^*) - \Delta/2 < \bb_t(i^*) - \Delta + 2\sqrt{\frac{4K\log (2KT/\delta)}{t}} < \bb_t(i^*) - 2\sqrt{\frac{4K\log (2KT/\delta)}{t}} < LCB(i^*;j).
\end{align*}
Since $ \bb_t(i^*) \geq  \bb_t(j)+\Delta $, it implies that $UCB(j;t)<LCB(i^*;t)$ for $t > \frac{64K \log (2KT/\delta)}{\Delta^2}$.
Thus for any $t > \frac{64K \log (2KT/\delta)}{\Delta^2}$, the algorithm would detect $\hat i = \{i^*\}$, and hence the regret at $\tau$ is $r_\tau = 0$ for the remaining rounds $\tau= t+1,\ldots, T$.
The final high probability regret upper bound now follows from the statement of \cref{lem:gap_conf} and the fact that instantaneous regret at any round $t$ such that $(x_t,y_t) \neq (i^*,i^*)$ is at most $1$.
\end{proof}

\section{Appendix for Sec.~\ref{sec:lb}}
\label{app:lb}

\lb*

\begin{proof}
We will show specifically that for $T \le \frac{K}{1440 \epsilon^3}$ we have $R_T =  \Omega\big(\epsilon T\big)$ and for $T > \frac{K}{1440 \epsilon^3}$ we have $R_T=\Omega\Big(\frac{K}{\epsilon^2}\Big)$.


The proof relies on constructing a `hard enough' problem instance for the learning framework and showing no algorithm can achieve a smaller rate of regret on that instance than the claimed lower bounds.

For simplicity of notation we assume $K$ is even (similar technique could also be used to prove the same bound when $K$ is odd, and show that the lemma also applies to $K=3$. We denote by $\tK:= \frac{K}{2}$. 
Let us construct $\tK+1$ problem instances $\cI^1,\cI^2,\ldots,\cI^\tK$ and $\cI^0$, where each instance is uniquely identified by its underlying preference matrix as defined below: 

\noindent \textbf{Problem instance}$(\cI^0)$: For all $t \in [T]$, $P_t(i,j) = 
\begin{cases} 
0.5,~\forall i,j \in [\tK]$ \text{ or } $i,j \in [K]\sm [\tK]\\
0.9,~\forall i \in [\tK]$ \text{ and } $\forall j \in [K]\sm [\tK]\\
\end{cases}
$,  

or more explicitly:
\[
\P_t = \begin{bmatrix} 
0.5 & ... & 0.5 & 0.9 & ... & 0.9\\
. & ... & . & . & ... & .\\
. & ... & . & . & ... & .\\
. & ... & . & . & ... & .\\
0.5 & ... & 0.5 & 0.9 & ... & 0.9\\
0.1 & ... & 0.1 & 0.5 & ... & 0.5\\
. & ... & . & . & ... & .\\
. & ... & . & . & ... & .\\
. & ... & . & . & ... & .\\
0.1 & ... & 0.1 & 0.5 & ... & 0.5\\
\end{bmatrix}, ~~\forall t \in [T].
\]

Note for $\cI^0$, $\forall t \in [T]$, for any item $i \in [\tK]$, $s_t(i) = 0.7$, and for any item $i \in [K]\sm[\tK]$, $s_t(i) = 0.3$. 
Thus for the instance $\cI^0$, any item $i \in [\tK]$ is an optimal arm. Now let us consider $\tK$ alternative problem instances $\cI^{m} ~~\forall m \in [\tK]$:

\noindent \textbf{Problem instance}$(\cI^{m})$: 
For all $t \in [T]$, $P_t(i,j) = 
\begin{cases} 
0.5,~\forall i,j \in [\tK]$ \text{ or } $i,j \in [K]\sm [\tK]\\
0.9,~\forall i \in [\tK]$ \text{ and } $\forall j \in [K]\sm [\tK]\\
0.9+\epsilon,~\text{if } i = m, \forall j \in [K]\sm [\tK]
\end{cases}
$,  
for some $\epsilon\in (0,0.1]$. For example $\cI^{1}$ would be:
\[
\P_t = \begin{bmatrix} 
0.5 & ... & 0.5 & 0.9+\epsilon & ... & 0.9+\epsilon\\
. & ... & . & . & ... & .\\
. & ... & . & . & ... & .\\
. & ... & . & . & ... & .\\
0.5 & ... & 0.5 & 0.9 & ... & 0.9\\
0.1-\epsilon & ... & 0.1 & 0.5 & ... & 0.5\\
. & ... & . & . & ... & .\\
. & ... & . & . & ... & .\\
. & ... & . & . & ... & .\\
0.1-\epsilon & ... & 0.1 & 0.5 & ... & 0.5\\
\end{bmatrix}, ~~\forall t \in [T],
\]
and so on.
Note for any $\cI^{m}$, $\forall t \in [T]$, for any item $s_t(i) = 
\begin{cases}
0.7 , ~\forall i \in [\tK]\sm\{m\}\\
0.7+\epsilon, \text{ if } i = m\\
0.3-\frac{\epsilon}{K}, \text{ if } i = n
\end{cases}
$. 

Clearly, for instance $\cI^{m}$, the unique \emph{`best'} item is $i^{m}_{*}:=m$, and all $j \in [K]\sm[\tK]$ items are the \emph{`bad'} playing which at any round $t \in [T]$ yields a  constant regret of $\frac{\big(s_t(i^{m}_{*}) - s_t(j)\big)}{2} = 0.2 + \frac{(K+1)\epsilon}{2K}$, and all $j \in [\tK]\sm\{m\}$ items are the \emph{`near-best'} playing which at any round $t \in [T]$ yields at least a regret of $\frac{\big(s_t(i^{m}_{*}) - s_t(j)\big)}{2} = \frac{\epsilon}{2}$. However in order to distinguish the \emph{`best'} and the \emph{`near-best'} items, it is necessary to play the \emph{`bad'} items `sufficiently enough' to infer which of the $[\tK]$ items has the highest borda score. Intuitively the main idea of our lower bound technique lies in showing that in this process any learner has to pull the \emph{`bad'} items \emph{at least} a certain number of times which would lead consequently lead to the regret lower bound. 
The remaining arguments proves this formally. 

Towards this let us first define a few notations: For any algorithm $\cA$, let $N^{\cA}_T(i,j) := \E[\sum_{t = 1}^T\1(\{i,j\} = \{x_t,y_t\})]$, denotes the expected number of times $\cA$ pulls arm-pair $(i,j) \in [K]\times[K]$ in $T$ rounds (the expectation is taken over the randomness of the preference feedback). 
For simplicity of notations, we henceforth would denote $N^{\cA}_T(\cdot) = N_T(\cdot)$. 
We also denote by $D_t = \{x_t,y_t\}$, and $\Delta^{m}_j = \frac{\big(s_t(i^m_*) - s_t(j)\big)}{2}$, for all $m \in [\tK],~ j \in [K]$.

We now make the following two key observations:

\textbf{Observation $1$.} We consider only the class of all deterministic algorithms, i.e. where $x_t,y_t$ is a deterministic function of the past history $\cH_{t-1}$. Note this is without loss of generality, since any randomized strategy can be seen as a randomization over deterministic querying strategies. Thus, a lower bound which holds uniformly for any deterministic class of algorithms, would also hold over a randomized class of algorithms.

\textbf{Observation $2$.} We also consider that for any instance $\cI^m$ ($m \in [\tK]\cup\{0\}$), the algorithm $\cA$ pulls the `suboptimal-pairs' (i.e. any pair which contains at least one bad arm from $[K]\sm[\tK]$) for at most $\epsilon T$ times, i.e. for any $m \in [\tK]\cup\{0\}$, $\E_{\cI^{m}}[\sum_{i,j \mid \{i,j\} \cap [K]\sm[\tK] \neq \emptyset} N^{\cA}_T(i,j)] \le \epsilon T$. This is without loss of generality, since otherwise we already have an $\Omega(\epsilon T)$ lower bound.
  
We now turn our attention to proving the main result. We will break it into the following two case analyses:
$(1).\, T \le \frac{K}{1440 \epsilon^3}$, and $(2).\, T > \frac{K}{1440 \epsilon^3}$. 

\textbf{Case 1. \big($T \le \frac{K}{1440 \epsilon^3}$\big):} 
Firstly recall from Rem. \ref{rem:regeqv} regret definition $R_T^s$ defined in terms of the \emph{`shifted-borda score'} $s_t$. It would be convenient to first lower bound $R_T^s$.
As argued earlier, since for any $m \in [\tK]$ and $j \neq i^{m}_{*} = m$, $\Delta^{m}_j \ge \frac{\epsilon}{2}$, note the regret of any algorithm $\cA$ on instance $\cI^{m}$ for $T$ rounds, can be lower bounded as: 

\begin{align*}
\nonumber \E_{\cI^{m}}[R_T^s(\cA)] &= \sum_{t = 1}^T\sum_{i = 1}^K\sum_{j = 1}^K\bigg(\E_{\cI^{m}}\big[\1(D_t = \{i,j\})\big]\frac{\Delta^{m}_i + \Delta^{m}_j}{2}\bigg) \\
& \ge \sum_{t = 1}^T\bigg(\E_{\cI^{m}}\big[\1(D_t \neq \{i^{m}_{*},i^{m}_{*}\})\big]\frac{\epsilon}{2}\bigg)\\
& \ge \sum_{t = 1}^T\frac{\epsilon}{2}\bigg(T - \E_{\cI^{m}}\big[\1(D_t = \{i^{m}_{*},i^{m}_{*}\})\big]\bigg) = \frac{\epsilon}{2}\big(T - \E_{\cI^{m}}[N_T(i^{m}_{*},i^{m}_{*})]\big). 
\end{align*}

Then taking average over $\cI^{m}$s for all $m \in [\tK]$: 
\begin{align}
\label{eq:lb0}
\E[R_T^s(\cA)] = \sum_{m \in [\tK]}\frac{\E_{\cI^{m}}[R_T^s(\cA)]}{\tK} \ge \frac{\epsilon}{2}\bigg(T - \frac{\sum_{m \in [\tK]}\E_{\cI^m}[N_T(m,m)]}{\tK}\bigg)
\end{align}
since $i^{m}_{*}=m$. 
Now note that:

\begin{align}
\label{eq:lb1}
\nonumber & \E_{\cI^{m}}[N_T(m,m)] - \E_{\cI^{0}}[N_T(m,m)] \\
& = \sum_{t = 1}^T\big(Pr_{\cI^{m}}(D_t=\{m,m\}) - Pr_{\cI^{0}}(D_t=\{m,m\})\big) \le T. D_{TV}(\cI^0,\cI^{m}),
\end{align} 
where $D_{TV}(\cI^0,\cI^{m})$ denotes the total variation distance between the probability distribution of $\cI^0$ and $\cI^{m}$ with respect to $\cH_T$, i.e. $D_{TV}(\cI^0,\cI^{m}): = \sup_{\cE \in \cH_T}|Pr_{\cI^0}(\cE)-Pr_{\cI^{m}}(\cE)|$, $\cH_T = \sigma\big(\{\P_t(x_t,y_t)\}_{t \in T}\big)$ being the sigma algebra generated by the observed history till time $T$.

Further using Pinksker's inequality we have 
\begin{align}
\label{eq:lb2}
D_{TV}(\cI^0,\cI^{m}) \le \sqrt{\frac{1}{2}D_{KL}(\cI^0,\cI^{m})},
\end{align}
where $D_{KL}(\cI^0,\cI^{m})$ denotes the KL-divergence between the probability distribution induced on the observed history $\cH_T$ by the problem instance $\cI^0$ and $\cI^{m}$. 
Thus averaging over $\cI^{m}$s for all $m \in [\tK]$: 
\begin{align}
\label{eq:lb00}
\nonumber &\frac{\sum_{m \in [\tK]}\E_{\cI^{m}}[N_T(m,m)]}{\tK}  \le \frac{\sum_{m \in [\tK]}\big(\E_{\cI^{0}}[N_T(m,m)]   + T. D_{TV}(\cI^0,\cI^{m})\big)}{\tK}\\
\nonumber & = \frac{\sum_{m \in [\tK]}\E_{\cI^{0}}[N_T(m,m)]}{\tK} + T\sum_{m \in [\tK]} \frac{1}{\tK}\big(  \sqrt{\frac{1}{2}D_{KL}(\cI^0,\cI^{m})}\big)\\
& = \frac{\sum_{m \in [\tK]}\E_{\cI^{0}}[N_T(m,m)]}{\tK} + T\sqrt{\bigg(  \frac{1}{2\tK}\sum_{m \in [\tK]}D_{KL}(\cI^0,\cI^{m})\bigg)}
\end{align}

Now with slight abuse of notation, by denoting $\cI^0_t := Pr_{\cI^0}\big(\P_t(x_t,y_t) \mid \cH_{t-1}\big)$ and $\cI^{m}_t := Pr_{\cI^{m}}\big(\P_t(x_t,y_t) \mid \cH_{t-1}\big)$, we note that: 
\[
D_{KL}(\cI^0_t,\cI^{m}_t) \sim \begin{cases} 
KL\big(\text{Ber}(0.9),\text{Ber}(0.9+\epsilon )\big), ~\text{if }  D_t=\{m,n\} \text{ for any } n \in [K]\sm[\tK]\\
0, ~\text{otherwise}
\end{cases}.
\] 

Further using chain rule of KL-divergence we get:

\begin{align*}
& D_{KL}(\cI^0,\cI^{m}) = \sum_{t = 1}^{T}D_{KL}(\cI^{0}_t, \cI^{m}_t) = \sum_{t = 1}^{T}\sum_{n = \tK+1}^{K}Pr_{\cI^0}(D_t=\{m,n\})D_{KL}\big( \text{Ber}(0.9),\text{Ber}(0.9+\epsilon) \big) \\
& \le D_{KL}\big( \text{Ber}(0.9),\text{Ber}(0.9+\epsilon) \big)\sum_{t = 1}^{T}\sum_{n = \tK+1}^{K}Pr_{\cI^0}(D_t=\{m,n\}) \le 90K\epsilon^2 \sum_{n = \tK+1}^{K}\E_{\cI^0}[N_T(m,n)], 
\end{align*}
where the last inequality follows by noting $D_{KL}\big( \text{Ber}(0.9),\text{Ber}(0.9+\epsilon) \big) \le 90\epsilon^2$ for any $\epsilon \in (0,0.1)$. Further averaging over $\cI^{m}$s for all $m \in [\tK]$: 

\begin{align*}
\nonumber &\frac{\sum_{m \in [\tK]}D_{KL}(\cI^0,\cI^{m}) }{\tK}  \le \frac{90 \epsilon^2\sum_{m \in [\tK]} \sum_{n = \tK+1}^{K}\E_{\cI^0}[N_T(m,n)]}{\tK} \le \frac{90\epsilon^3 T}{\tK} 
\end{align*}

where the last inequality follows due to Observation $2$. 
Now combining above with Eqn. \eqref{eq:lb0} and \ref{eq:lb00} we get:

\begin{align*}
\E & [R_T^s(\cA)] 
\ge  \frac{\epsilon}{2}\bigg(T - \frac{\sum_{m \in [\tK]}\E_{\cI^m}[N_T(m,m)]}{\tK}\bigg)\\
& \ge \frac{\epsilon}{2}\Bigg(T - \bigg(\frac{\sum_{m \in [\tK]}\E_{\cI^{0}}[N_T(m,m)]}{\tK} + T\sqrt{\Big(  \frac{1}{2\tK}\sum_{m \in [\tK]}D_{KL}(\cI^0,\cI^{m})\Big)}\bigg) \Bigg) \\
& \ge \frac{\epsilon}{2}\Bigg(T - \bigg(\frac{T}{\tK} + T\sqrt{\Big(\frac{90\epsilon^3 T}{K}\Big)}\bigg) \Bigg)
\overset{(1)}{\ge} \frac{\epsilon}{2}\Bigg(T - \bigg(\frac{2T}{K} + T\sqrt{\frac{1}{16}}\bigg) \Bigg) \ge \frac{\epsilon}{2}\Big( T - \frac{3T}{4} \Big) = \frac{\epsilon T}{8}
\end{align*}
where $(1)$ holds since by the assumption of Case \textbf{1} we have $T \le \frac{K}{1440 \epsilon^3}$, and the last inequality follows for any $K \ge 4$. This gives a regret lower bound for Case \textbf{1}.

\textbf{Case 2. \big($T > \frac{K}{1440 \epsilon^3}$\big):}
Let us denote by $T_0 = \frac{K}{1440 \epsilon^3}$, and first assume that there exist a $T' > T_0$ such that $R^s_{T'} \le \frac{K}{115200 \epsilon^2}$. However this implies $R^s_{T_0} \le R^s_{T'} \le \frac{K}{115200 \epsilon^2} = \frac{\epsilon T_0}{8}$. But this is a contradiction as per the lower bound of Case \textbf{1}. Thus for any $\epsilon \in (0,0.1)$ and $T > \frac{K}{1440 \epsilon^3}$, any learning algorithm must incur at least an expected regret of $R^s_T \ge \frac{K}{115200 \epsilon^2}$.

The final regret lower bound now follows combining the lower bounds of Case \textbf{1} and \textbf{2}, and from the fact that $R_T \ge {R_T^s}$ (as per Rem. \ref{rem:regeqv}). 

\end{proof}


\end{document}